\newtheorem{prop}{Proposition}
\title{CASA: CNN Autoencoder-based Score Attention for\\ Efficient Multivariate Long-term Time-series Forecasting }
\author{
Minhyuk Lee$^{1 \; *}$
\and
HyeKyung Yoon$^{3 \; *}$\And
MyungJoo Kang$^{1, 2, 3 \; \dagger}$\\
\affiliations
$^1$Department of Mathematical Sciences, $^2$Research Institute of Mathematics and\\ $^3$Interdisciplinary Program in Artificial Intelligence, Seoul National University\\
\emails
\{356min, yhk04150, mkang\}@snu.ac.kr,
}
\renewcommand\thesubfigure{(\alph{subfigure})} 
\begin{document}
\maketitle
\footnotetext{$^*$ Equal contribution.}
\footnotetext{$^\dagger$ Corresponding author.}

\begin{abstract}
    Multivariate long-term time series forecasting is critical for applications such as weather prediction, and traffic analysis. In addition, the implementation of Transformer variants has improved prediction accuracy. Following these variants, different input data process approaches also enhanced the field, such as tokenization techniques including point-wise, channel-wise, and patch-wise tokenization. However, previous studies still have limitations in time complexity, computational resources, and cross-dimensional interactions. To address these limitations, we introduce a novel CNN Autoencoder-based Score Attention mechanism (CASA), which can be introduced in diverse Transformers model-agnosticically by reducing memory and leading to improvement in model performance. Experiments on eight real-world datasets validate that CASA decreases computational resources by up to 77.7\%, accelerates inference by 44.0\%, and achieves state-of-the-art performance, ranking first in 87.5\% of evaluated metrics. Our code is available at \url{https://github.com/lmh9507/CASA}.

\end{abstract}

\section{Introduction}

\begin{figure}[h!]
    \centering
    \includegraphics[width=0.85\columnwidth]{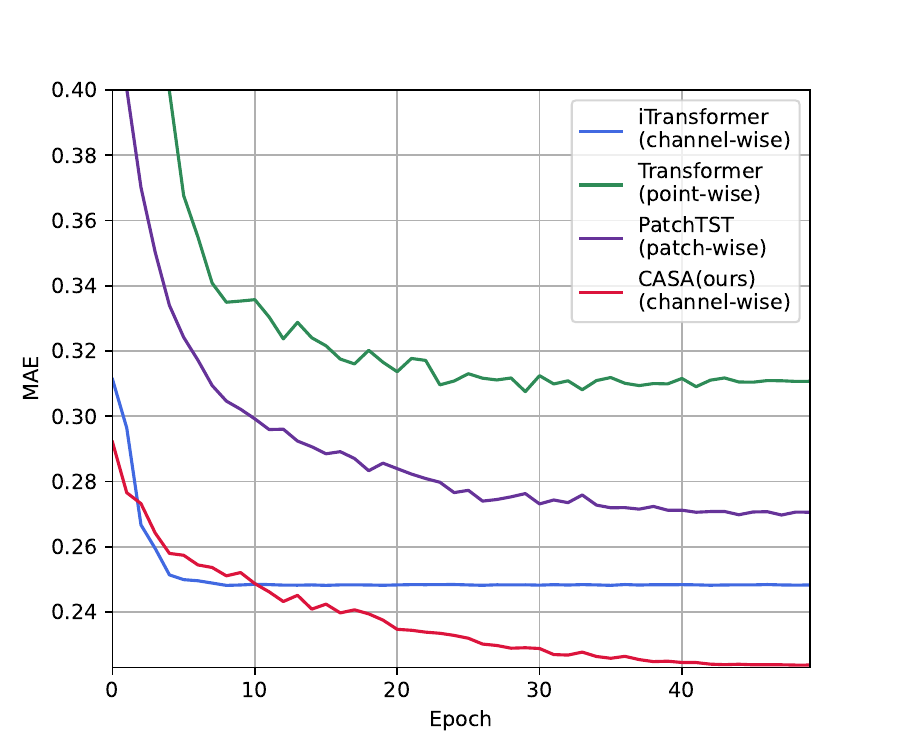} 
    \caption{The validation loss of iTransformer, Transformer, PatchTST, and our model on the Traffic dataset is evaluated. Point-wise and patch-wise implemented models exhibit lower performance compared to channel-wise models. However, while the iTransformer model rapidly saturates, CASA demonstrates consistent learning and achieves the lowest loss value.}
    \label{fig:val}
\end{figure}

Multivariate Long-Term Time Series Forecasting (LTSF) plays a pivotal role in real-world applications, including weather prediction, traffic flow analysis \cite{ji2023spatio}, and solar energy forecasting \cite{lai2018modeling}. LTSF has seen rapid advancements driven by the emergence of Transformer model \cite{vaswani2017attention}. Subsequent Transformer variants have further demonstrated the effectiveness of the multi-head self-attention mechanism in capturing temporal dependencies and cross-dimensional correlations. \cite{zhou2021informer,wu2021autoformer,liu2022pyraformer,zhou2022fedformer,liu2022non,zhang2023crossformer,nie2022time}.

Despite numerous efforts, Transformer-based models have not consistently outperformed CNN- or MLP-based architectures in the LTSF domain \cite{wu2022timesnet,ekambaram2023tsmixer,das2023long,zeng2023transformers}. Notably, DLinear \cite{zeng2023transformers}, a model constructed with simple linear layers, raises critical questions about the effectiveness and necessity of Transformer-based architectures in this field, especially considering their demanding computational and time resources. To address the aforementioned challenges, diverse tokenization techniques have been introduced into the core architecture of Transformer-family models \cite{liu2023itransformer,nie2022time}. We evaluate the effectiveness of three models, each employing a distinct tokenization technique, as illustrated in Figure \ref{fig:token}. Figure \ref{fig:val} demonstrates that channel-wise tokenization achieves the best performance among the three, as highlighted in \cite{yu2024revitalizing}. However, it still leads to rapid saturation during training. Moreover, computational cost and memory usage remain significant challenges. Our objective is to develop a more efficient model that delivers superior predictive performance while mitigating these drawbacks. \textbf{We tackle these issues by refining the self-attention mechanism, the cornerstone of Transformer-based models.}

\begin{figure*}[h!]
    \centering
    \includegraphics[width=0.8\textwidth]{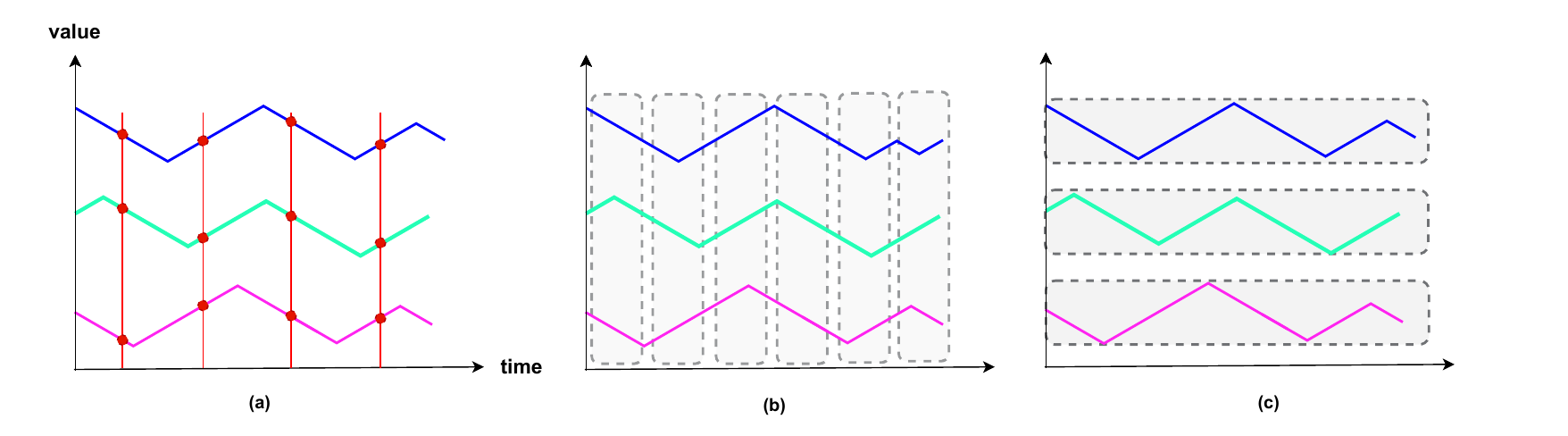} 
    \caption{(a) point-wise token (b) patch-wise token (c) channel-wise token}
    \label{fig:token}
\end{figure*}

In this paper, we propose \textbf{CNN Autoencoder-based Score Attention (CASA)}, a simple yet novel module that addresses the aforementioned gap by effectively capturing correlations and avoiding saturation, thereby facilitating consistent learning, designed to serve as an alternative to the conventional self-attention mechanism. We retain the vanilla Transformer encoder while substituting the attention mechanism with a CNN-based module. CASA approximates $\frac{Q K^T}{\sqrt{d_k}}$ rather than directly calculating it, as done in traditional multi-head self-attention. This design addresses a critical limitation of conventional methods by sufficiently accounting for significant correlation between variates in the calculation of attention scores (see Section 3.3 for details).

Our key contributions are as follows:
\begin{itemize}
    \item \textbf{We present a simple yet effective CNN Autoencoder-based Score Attention (CASA) module} as an alternative to self-attention. It scales linearly with the number of variates, input length, and prediction length. Compared to Transformer-based variants, CASA reduces memory usage by up to \textbf{77.7\%} and improves computational speed by \textbf{44.0\%}, depending on the dataset.
    \item \textbf{CASA can be agnostically integrated into Transformer models, regardless of the tokenization technique used} (e.g., point-wise, patch-wise, or channel-wise). To the best of our knowledge, this is the first module validated across individual tokenization techniques, successfully enhancing cross-dimensional information capture and improving prediction performance.
 \item CASA achieved \textbf{first place in 54 out of 64 metrics} across 8 real-world datasets and ranked highest in \textbf{14 out of 16 average metrics}, establishing itself as a highly competitive solution for multivariate LTSF.
\end{itemize}

\begin{figure*}[h!]
    \centering
    \includegraphics[width=0.9\textwidth]{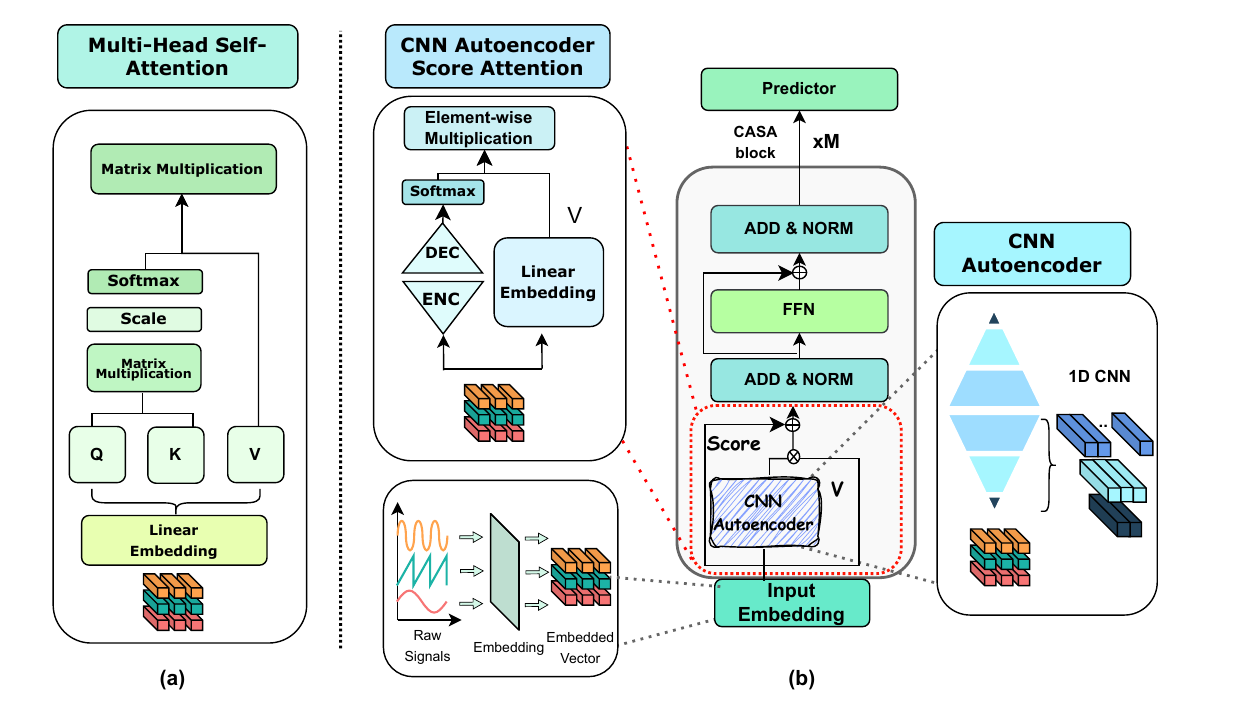} 
    \caption{(a) Conventional Self-Attention. (b) Overall architecture of our CASA block. The time-series data is embedded using channel-wise tokenization. The 1D CNN Autoencoder is then used to compute cross-dimensional information. The softmax output and the value are multiplied element-wise. Our CASA places a strong emphasis on capturing essential cross-dimensional information by calculating high-dimensional spatial relationships before compressing the channel information.}
    \label{fig:overall}
\end{figure*}
\vspace{-5pt}

\section{Related Works}
\paragraph{Transformer variants} The vanilla Transformer model \cite{vaswani2017attention}, widely recognized for its success in natural language processing, has also achieved notable advancements in time-series forecasting. Diverse Transformer variants have been introduced to enhance forecasting performance, which can be broadly grouped into three approaches. The first approach modifies the traditional self-attention mechanism with alternatives by incorporating specialized modules,  or pyramidal attention \cite{liu2022pyraformer}, to reduce memory requirements while capturing multi-resolution representations. Additional modifications, including the trapezoidal architecture \cite{zhou2021informer} and de-stationary attention \cite{liu2022non}, aim to improve robustness and address issues like over-stationarization. The second approach leverages frequency-domain techniques, such as Fast Fourier Transform (FFT) \cite{zhou2022fedformer} and auto-correlation mechanisms \cite{wu2021autoformer}, to better extract temporal patterns. The third approach introduces hierarchical encoder-decoder frameworks \cite{zhang2023crossformer} with routing mechanisms to capture cross-dimensional information, although these methods sometimes encounter challenges such as slower learning and higher computational demands.

\vspace{-5pt}

\paragraph{Alternatives of Transformers} While Transformer variants have significantly advanced the time-series forecasting domain, CNN-based models present promising alternatives. These approaches include methods that model segmented signal interactions \cite{scinet} and those that reshape 1D time-series data into 2D tensors \cite{wu2022timesnet}, enabling the capture of both inter-period and intra-period dynamics. Similarly, linear models \cite{zeng2023transformers} have demonstrated simplicity while achieving high prediction performance. However, these methods generally fall short of explicitly addressing cross-dimensional interactions, which are crucial for improving multivariate time-series forecasting.
Other methods have been developed to modify aspects of the Transformer architecture, particularly focusing on tokenization techniques. For instance, PatchTST \cite{nie2022time} segments input data into patches to extract local information within individual variates, while iTransformer \cite{liu2023itransformer} treats each variate as a token, enabling the self-attention mechanism to capture multivariate correlations. However, a common drawback of these methods is their reliance on self-attention, which demands substantial computational resources. Furthermore, their prediction performance remains suboptimal, with both models struggling to effectively capture multivariate dependencies, which critically impacts their predictive accuracy. In contrast, our proposed model, CASA, addresses these challenges by significantly reducing resource consumption while achieving superior prediction performance, offering an efficient and effective alternative to traditional self-attention-based approaches.

\section{Method}
This section provides an overview of CASA, including the motivation behind its architecture, a detailed explanation of its structure, and a complexity analysis. Section 3.1, we formulate the problem we aim to solve and define the necessary notations. Section 3.2 provides an explanation of the overall architecture. Section 3.3 discusses the limitations of conventional Transformers in capturing cross-dimensional interactions. Section 3.4 introduces CNN Autoencoder-based Score Attention (CASA), an improved self-attention module designed to address these issues. We confirm theoretical efficiency of CASA through complexity analysis.
\subsection{Problem Formulation}
In multivariate LTSF, given an input length $L$, the number of variates $N$, the number of layers $M$, and a prediction length $H$, denote the input and the output $X \in \mathbb{R}^{N \times L}$ and $Y \in \mathbb{R}^{N \times H}$ respectively. The hidden dimension is denoted as $D$, and the intermediate feature after embedding is represented as $Z_i \in \mathbb{R}^{N \times D} \; ( i \in \{0, 1, \cdots M\} )$. Since this is not a univariate time-series forecasting problem, the input is consistently represented as a matrix.

\begin{table*}[h]
\centering
\setlength{\tabcolsep}{3.5pt}
\small
\begin{tabular}{lccccc}
\toprule
 & \textbf{CASA} & \textbf{SOFTS} & \textbf{iTransformer} & \textbf{PatchTST} & \textbf{Transformer} \\ 
\midrule
\textbf{Complexity} & $O(NL + NH)$ & $O(NL + NH)$ & $O(N^2 + NL + NH)$ & $O(NL^2 + NH)$ & $O(NL + L^2 + HL + NH)$ \\ 
\midrule
\textbf{Memory}(MB) & 1684 & 1720 & 10360 & 21346 & 4772 \\ 
\midrule
\textbf{Inference}(s/iter) & 0.05 & 0.042 & 0.162 & 0.441 & 0.087 \\ 
\bottomrule
\end{tabular}
\caption{A complexity comparison conducted on the Traffic dataset among baseline models, including the vanilla Transformer, PatchTST, iTransformer, and SOFTS, with respect to window length L, number of channels N, and forecasting horizon H. Notably, the complexity of CASA scales linearly with N, L, and H. Detailed implementation information is provided in Appendix D.
}
\label{tab:complexity_comparison}
\end{table*}

\subsection{Overall Architecture}
The overall framework is depicted in Figure \ref{fig:overall}. Following prior works \cite{liu2023itransformer,yu2024revitalizing}, we adopt a channel-wise tokenization approach, utilizing the vanilla Transformer encoder as the backbone. To address the challenges outlined in Section 3.3, we enhance the Transformer by replacing only the attention mechanism with a 1D \textbf{CNN Autoencoder Score Attention (CASA)}. Although this modification alters only a very small portion of the overall model, it effectively reduces computational cost and memory usage while improving prediction performance.

The input $X$ is linearly embedded to produce the intermediate feature $Z_0$. The final feature $Z_M $, obtained by passing $Z_0$ through $M$ CASA blocks, is then fed into the predictor where the output becomes $Y$. The pipeline is summarized by the following equation.
\begin{eqnarray}
    Z_0 = \mathrm{Embedding}(X) \\
    Z_{i+1} = \mathrm{CASA\;block}(Z_i) \\
    Y = \mathrm{Predictor}(Z_M)
\end{eqnarray}

\subsection{Limitation of Self-Attention} 
We demonstrate that the existing self-attention mechanism does not sufficiently consider cross-dimension information when embedding queries and keys. the structure of the self-attention mechanism in the conventional Transformer is as follows ($f_j$: affine map):
\begin{equation}
    \mathrm{Attention}(Z_{i+1}) = \mathrm{softmax}\left( \frac{Q_{i+1}K_{i+1}^T}{\sqrt{d_k}} \right) * V_{i+1}
\end{equation}
\begin{equation}
    Q_{i+1} = f_1(Z_i),\;\, K_{i+1} = f_2(Z_i),\;\, V_{i+1} = f_3(Z_i)
\end{equation}
At this point, since $f_j$ is an affine map, queries and keys are computed through the following operations:
\begin{eqnarray}
    Q_{i+1} = Z_{i, 1}W_{i, 1} + b_{i, 1}, \quad K_{i+1} = Z_{i, 2}W_{i, 2} + b_{i, 2}
\end{eqnarray}
\begin{equation}
    \mathrm{where} \; W_{i, j} \in \mathbb{R}^{D \times D} \; \mathrm{and} \; b_{i, j} \in \mathbb{R}^{N \times D}
\end{equation}

\begin{prop}\label{prop:qk}
Query and key embeddings are variate-independent operations in the conventional Transformer using channel-wise tokenization.
\end{prop}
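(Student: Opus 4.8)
The plan is to unpack the definitions in Equations (6)--(7) and observe that the linear embedding acts row-by-row, so the query (resp.\ key) vector associated with one variate depends only on that variate's token, not on the tokens of the other variates. First I would recall the setup: under channel-wise tokenization, $Z_i \in \mathbb{R}^{N \times D}$ has its $n$-th row $Z_i[n,:]$ representing the embedded token of the $n$-th variate. I would then write out the matrix product $Q_{i+1} = Z_{i,1} W_{i,1} + b_{i,1}$ entrywise in terms of rows: for each $n \in \{1,\dots,N\}$,
\begin{equation}
    Q_{i+1}[n,:] = Z_{i,1}[n,:]\, W_{i,1} + b_{i,1}[n,:],
\end{equation}
and similarly for $K_{i+1}[n,:]$. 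This is just the statement that left-multiplication by a row-selector commutes with right-multiplication by $W_{i,1}$ and with addition of $b_{i,1}$.

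Next I would make precise what ``variate-independent'' means and verify it from the displayed formula. The claim is that the map sending the token matrix to its $n$-th query row factors through the $n$-th input row alone: there is a function $g_n$ (here the affine map $z \mapsto z W_{i,1} + b_{i,1}[n,:]$) such that $Q_{i+1}[n,:] = g_n(Z_{i,1}[n,:])$ and this value is unchanged if we perturb any row $Z_{i,1}[m,:]$ with $m \neq n$. Since $W_{i,1}$ is applied on the right and is shared across rows, and the bias is added row-wise, this is immediate from the block form above; I would state it as: the Jacobian $\partial Q_{i+1}[n,:] / \partial Z_{i,1}[m,:]$ vanishes for $m \neq n$. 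The same argument applies verbatim to $K_{i+1}$ with $W_{i,2}, b_{i,2}$. Hence no cross-variate mixing occurs during the query/key embedding step — mixing across variates only enters later, through the product $Q_{i+1} K_{i+1}^T$ in Equation (4).

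I do not expect a genuine obstacle here, since the proposition is essentially an observation that a shared right-multiplying weight matrix cannot mix rows; the only care needed is to pin down the (slightly informal) term ``variate-independent operation'' so that the one-line matrix identity actually discharges it, and to note that the bias term $b_{i,j} \in \mathbb{R}^{N \times D}$, despite being indexed by the variate dimension, is an additive constant per row and therefore still does not create any dependence of row $n$ on the data of row $m \neq n$. I would close by remarking that this is exactly the gap CASA is designed to fill: because the score matrix is obtained only after this variate-separable embedding, conventional channel-wise attention underweights cross-dimensional structure, motivating the CNN-autoencoder construction of Section~3.4.
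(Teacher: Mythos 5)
Your proposal is correct and follows essentially the same route as the paper's proof in Appendix E.1: both unpack $Q_{i+1} = Z_{i,1}W_{i,1} + b_{i,1}$ row-by-row and show that perturbing any row $m \neq n$ of the input leaves $\mathbf{row}_n(Q_{i+1})$ unchanged (your vanishing-Jacobian phrasing is just a restatement of the paper's $\Delta Z_{i,1}$ perturbation argument), with the identical argument applied to the keys. No gaps; the only addition you make is to spell out the formal meaning of ``variate-independent,'' which the paper leaves implicit.
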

\begin{proof}
See Appendix E.1.
\end{proof}

\begin{prop}\label{prop:qk2}
Query and key embeddings are time-independent operations in the conventional Transformer using point-wise and patch-wise tokenization.
\end{prop}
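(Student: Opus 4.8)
The plan is to mirror the argument behind Proposition~\ref{prop:qk}, but with the roles of the time axis and the variate axis interchanged, as dictated by what point-wise and patch-wise tokenization actually do to the input tensor. First I would fix notation for the two tokenization schemes: under point-wise tokenization each time step becomes a token, so the embedded feature is organized as a matrix whose rows are indexed by time (length $L$, or $L$ divided by the patch size in the patch-wise case) and whose columns are the hidden dimension $D$; crucially, for a \emph{fixed} token the embedding map mixes only the $N$ variate-coordinates at that single time index and never looks at other time indices. I would make this precise by writing the query/key computation token-by-token as $Q_{i+1}[t,:] = g_1\big(Z_i[t,:]\big)$ and $K_{i+1}[t,:] = g_2\big(Z_i[t,:]\big)$ for affine maps $g_1,g_2$ acting on the per-token slice, exactly paralleling equations (6)--(7) but with the index $t$ ranging over time positions instead of variates.

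Next I would give the definition of ``time-independent operation'' that makes the statement sharp — presumably the analogue of the variate-independence notion used implicitly in Proposition~\ref{prop:qk}: an operation on $Z_i$ is time-independent if the output at token $t$ depends only on the input slice at token $t$, i.e. it factors through the projection onto the $t$-th row, equivalently permuting the time tokens permutes the outputs in the same way and no cross-time coupling occurs in forming $Q$ and $K$. With that definition in hand, the proof is essentially a one-line observation: since the affine embedding map is applied identically and independently to each token (each row), and a token under point-wise/patch-wise tokenization is precisely a single time slice (or a single patch that is treated as one atomic token and embedded as a unit, so still no coupling \emph{across} tokens), the query and key at time $t$ cannot depend on $Z_i$ at any other time $t' \ne t$. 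Hence query and key embeddings are time-independent. For the patch-wise case I would add the small remark that within a patch the time steps are concatenated/flattened before the affine map, so ``time-independent'' is meant at the granularity of patches (tokens), not of individual raw time steps — this is the only subtlety worth flagging explicitly.

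The main obstacle, I expect, is not the mathematics but the bookkeeping: making the statement precise requires committing to exactly how the paper lays out the embedded tensor for each tokenization scheme (which axis is the ``token'' axis, whether patches are flattened before or after embedding, and whether positional encodings — which are added per-token and are themselves time-dependent constants — are considered part of the ``embedding'' operation or not). I would handle the positional-encoding point by noting that an additive positional term is a fixed offset depending on $t$ alone and not on the data $Z_i$, so it does not create a data-dependent cross-time interaction; the operation is still ``time-independent'' in the sense of not coupling the \emph{content} of different time tokens, which is the property the paper needs in order to argue (in Section 3.3) that cross-dimensional information is not captured. Once these conventions are pinned down, the remainder is the same short affine-map argument as in the proof of Proposition~\ref{prop:qk}, so I would simply defer the full details to the appendix (``See Appendix E.2'') in keeping with the paper's style.
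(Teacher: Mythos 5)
Your proposal is correct and follows essentially the same route as the paper: the paper's Appendix E.2 proof is a one-liner stating that transposing $Z_{i,1}$ reduces the claim to the row-independence argument of Proposition 1, which is precisely your observation that the affine map acts token-by-token with the time axis now playing the role of the token axis. Your additional remarks on patch granularity and positional encodings go beyond what the paper records but do not change the argument.
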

\begin{proof}
See Appendix E.2.
\end{proof}

 Proposition 1 and Proposition 2 imply that each tokenization method does not consider the correlation between variates or time points when embedding the key and query. Since multivariate time series exhibit correlations both between variates and across time points, this reduces the potential of the Transformer architecture. Especially, based on the Proposition \ref{prop:qk}, the Transformer using channel-wise tokenization does not directly incorporate cross-dimensional information when embedding the $r$-th variate into queries and keys. In other words, the self-attention mechanism embeds queries and keys through variate-independent feature refinement operations and then computes the attention map using $\frac{Q_i K_i^T}{\sqrt{d_k}}$. \textbf{In the LTSF domain, tokens (i.e., variates) exhibit inherent correlations} (see Section \ref{sec:dist}), \textbf{which reduce the effectiveness of variate-independent operations during feature refinement.} This limitation can hinder performance in the multivariate LTSF domain, where capturing correlations between variates is important.

 \subsection{CNN Autoencoder-based Score Attention}\label{sec:theoritical proof}
 To address the issue posed by the self-attention mechanism’s variate-independent operation, we treat each variate as a channel and apply a convolution instead of using the affine map from the conventional self-attention mechanism. This approach ensures that the operation becomes variate-dependent. In more detail, instead of directly computing $\frac{Q_i K_i^T}{\sqrt{d_k}}$, we designed a score network $\texttt{Score}$ to approximate this operation using 1D CNN Autoencoder. The modified structure of self-attention is as follows ($f$: affine map, $\circledast$: element-wise product):
\begin{eqnarray}
    \mathrm{Attention}(Z_{i+1}) = \mathrm{softmax}(\texttt{Score}(Z_i)) \circledast V_{i+1}
\end{eqnarray}
\begin{equation}
    V_{i+1} = f(Z_i)
\end{equation}
By approximating $\frac{Q_i K_i^T}{\sqrt{d_k}}$ via a CNN architecture instead of direct computation, we reduce complexity compared to the affine map (see the paragraph below), addressing the limitations of self-attention. This facilitates the development of a linear complexity model with enhanced performance (Section 4.1). To explain the score network in more detail, we adopted an inverted bottleneck autoencoder structure, inspired by previous research \cite{wilson2016deep,bengio2013representation}, which demonstrated that embedding low-dimensional features into a high-dimensional latent space can improve expressiveness. In summary, we leverage CNN operations to incorporate information across all variates, embedding them into a high-dimensional feature space before compressing the channels to retain only essential cross-variable information. Consequently, despite its simple architecture, the proposed module outperforms existing self-attention mechanisms while maintaining efficiency, constituting a significant contribution.

\begin{table*}[ht]
    \centering
    \scriptsize  
    \setlength{\tabcolsep}{5pt}  
    \begin{tabular}{c c c c c c c c c c}
        \toprule
        \multirow{1}{*}{Dataset} &
        \multicolumn{1}{c}{\textbf{CASA}(ours)} & \multicolumn{1}{c}{\makecell{\textbf{SOFTS} \\ Han \textit{et al.}, \\ 2024}} & \multicolumn{1}{c}{\makecell{\textbf{iTransformer} \\ Liu \textit{et al.}, \\ 2023}} & \multicolumn{1}{c}{\makecell{\textbf{PatchTST} \\ Nie \textit{et al.}, \\ 2022}} & \multicolumn{1}{c}{\makecell{\textbf{TSMixer} \\ Ekambaram \\  \textit{et al.}, 2023 }} & \multicolumn{1}{c}{\makecell{\textbf{Crossformer} \\ Zhang and Yan, \\ 2023}} & \multicolumn{1}{c}{\makecell{\textbf{TiDE} \\ Das \textit{et al.}, \\ 2023}} & \multicolumn{1}{c}{\makecell{\textbf{DLinear} \\ Zeng \textit{et al.}, \\ 2023}} & \multicolumn{1}{c}{\makecell{\textbf{FEDformer} \\ Zhou \textit{et al.}, \\ 2022}} \\
        \cline{2-10}
        & MSE($\mathbb{\downarrow}$) / MAE($\mathbb{\downarrow}$) & MSE / MAE & MSE / MAE & MSE / MAE & MSE / MAE & MSE / MAE & MSE / MAE & MSE / MAE & MSE / MAE \\
        \midrule
        \multirow{1}{*}{ETTm1} & \textcolor{red}{\textbf{0.386}} / \textcolor{red}{\textbf{0.393}} & \textcolor{blue}{\underline{0.393}} / \textcolor{blue}{\underline{0.403}} & 0.407 / 0.410 & 0.396 / 0.406 & 0.398 / 0.407 & 0.513 / 0.496 & 0.419 / 0.419 & 0.474 / 0.453 & 0.543 / 0.490 \\
        \midrule
        \multirow{1}{*}{ETTm2} & \textcolor{red}{\textbf{0.276}} / \textcolor{red}{\textbf{0.319}} & \textcolor{blue}{\underline{0.287}} / \textcolor{blue}{\underline{0.330}} & 0.288 / 0.332 & \textcolor{blue}{\underline{0.287}} / \textcolor{blue}{\underline{0.330}} & 0.289 / 0.333 & 0.757 / 0.610 & 0.358 / 0.404 & 0.350 / 0.401 & 0.305 / 0.349 \\
        \midrule
        \multirow{1}{*}{ETTh1} & \textcolor{red}{\textbf{0.438}} / \textcolor{red}{\textbf{0.434}} & 0.449 / \textcolor{blue}{\underline{0.442}} & 0.454 / 0.447 & 0.453 / 0.446 & 0.463 / 0.452 & 0.529 / 0.522 & 0.541 / 0.507 & 0.456 / 0.452 & \textcolor{blue}{\underline{0.440}} / 0.460 \\
        \midrule
        \multirow{1}{*}{ETTh2} & \textcolor{blue}{\underline{0.374}} / \textcolor{red}{\textbf{0.397}} & \textcolor{red}{\textbf{0.373}} / \textcolor{blue}{\underline{0.400}} & 0.383 / 0.407 & 0.385 / 0.410 & 0.401 / 0.417 & 0.942 / 0.684 & 0.611 / 0.550 & 0.559 / 0.515 & 0.437 / 0.449 \\
        \midrule
        \multirow{1}{*}{ECL} & \textcolor{red}{\textbf{0.168}} / \textcolor{red}{\textbf{0.259}} & \textcolor{blue}{\underline{0.174}} / \textcolor{blue}{\underline{0.264}} & 0.178 / 0.270 & 0.189 / 0.276 & 0.186 / 0.287 & 0.244 / 0.334 & 0.251 / 0.344 & 0.212 / 0.300 & 0.214 / 0.327 \\
        \midrule
        \multirow{1}{*}{Traffic} & \textcolor{blue}{\underline{0.421}} / \textcolor{red}{\textbf{0.261}} & \textcolor{red}{\textbf{0.409}} / \textcolor{blue}{\underline{0.267}} & 0.428 / 0.282 & 0.454 / 0.286 & 0.522 / 0.357 & 0.550 / 0.304 & 0.760 / 0.473 & 0.625 / 0.383 & 0.610 / 0.376 \\
        \midrule
        \multirow{1}{*}{Weather} & \textcolor{red}{\textbf{0.243}} / \textcolor{red}{\textbf{0.267}} & \textcolor{blue}{\underline{0.255}} / \textcolor{blue}{\underline{0.278}} & 0.258 / \textcolor{blue}{\underline{0.278}} & 0.256 / 0.279 & 0.256 / 0.279 & 0.259 / 0.315 & 0.271 / 0.320 & 0.265 / 0.317 & 0.309 / 0.360 \\
        \midrule
        \multirow{1}{*}{Solar} & \textcolor{red}{\textbf{0.221}} / \textcolor{red}{\textbf{0.244}} & \textcolor{blue}{\underline{0.229}} / \textcolor{blue}{\underline{0.256}} & 0.233 / 0.262 & 0.236 / 0.266 & 0.260 / 0.297 & 0.641 / 0.639 & 0.347 / 0.417 & 0.330 / 0.401 & 0.291 / 0.381 \\
        \midrule
        \textbf{$1^{st} / 2^{nd}$} count & 14 / 2 & 2 / 13 & 0 / 1 & 0 / 2 & 0 / 0 & 0 / 0 & 0 / 0 & 0 / 0 & 0 / 1 \\
        \bottomrule
    \end{tabular}
    \caption{Multivariate forecasting results with horizon $H\in \{ 96, 192, 336, 720 \}$ and fixed lookback window length $L = 96$. Red values represent the best performance, while underlined values represent the second-best performance. Results are averaged from all prediction horizons. Full results are listed in Table 6. (Appendix B)}
    \label{tab:ett_comparison}
\end{table*}

\begin{figure*}[h!]
    \centering
    \includegraphics[width=0.95\textwidth]{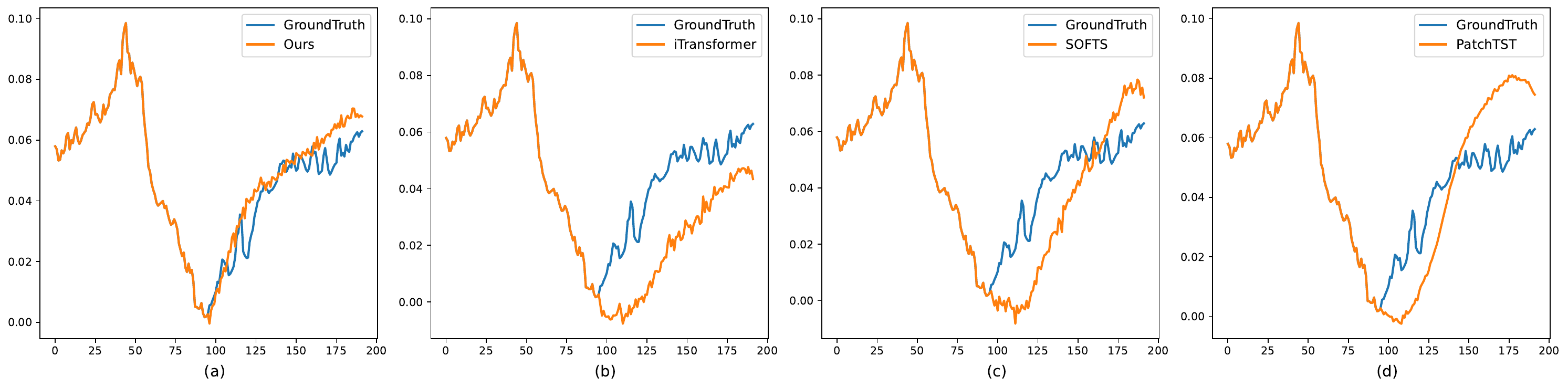} 
    \caption{Results of prediction of Ours and baseline models on Weather dataset. (a) CASA (b) iTransformer (c) SOFTS (d) PatchTST}
    \label{fig:prediction}
\end{figure*}

\paragraph{Complexity Analysis}
CASA is an efficient algorithm that exhibits \textbf{linear complexity} not only with respect to the number of tokens, i.e., the number of variates $N$, but also with respect to the input length $L$ and prediction length $H$. The detailed complexity calculation is as follows.
Let the kernel size of the score network be denoted as $k$. The complexities of the Reversible Instance Normalization (RevIN), series embedding, and MLP are $O(NL)$, $O(NLD)$, and $O(ND^2)$, respectively. Additionally, the complexity of the score network, composed of CNN autoencoder blocks, is $O(NkD^2)$. The predictor has a complexity of $O(NDH)$.
Thus, the overall complexity of our method is $O(NL + NLD + ND^2 + NkD^2 + NDH)$, which scales linearly with respect to $N$, $L$, and $H$. Since the hidden dimension and kernel size are constants in the algorithm, they can be ignored. Consequently, $N$ is dominated by $NL$ and $NH$(Since $L$ and $H$ typically take on large values in LTSF), leading to the final complexity summarized in Table \ref{tab:complexity_comparison}. In addition, the results for memory usage and inference time are included in the table, empirically demonstrating the efficiency of CASA. For details of the implementation, refer to the Appendix D.

\begin{table*}[ht]
    \centering
    \renewcommand{\arraystretch}{1.2} 
    \setlength{\tabcolsep}{4.0pt} 
    \small  
    \begin{tabular}{c c cc cc cc cc cc cc cc@{}} 
        \toprule
        \multirow{2}{*}{Model} & \multirow{2}{*}{Comp} & 
        \multicolumn{2}{c}{ECL} & \multicolumn{2}{c}{Traffic} & \multicolumn{2}{c}{Weather} & \multicolumn{2}{c}{ETTm1} & \multicolumn{2}{c}{ETTm2} & \multicolumn{2}{c}{ETTh1} & \multicolumn{2}{c}{ETTh2} \\ 
        \cline{3-16}
        & & MSE($\mathbb{\downarrow}$) & MAE($\mathbb{\downarrow}$) & MSE & MAE & MSE & MAE & MSE & MAE & MSE & MAE & MSE & MAE & MSE & MAE \\
        \midrule
        \multirow{2}{*}{Transformer} & Attention & 0.203 & 0.292 & 0.655 & 0.359 & 0.245 & 0.296 & 0.407 & 0.417 & 0.369 & 0.398 & 0.482 & 0.465 & 0.522 & \textbf{0.481} \\
        & CASA & \textbf{0.201} & \textbf{0.287} & \textbf{0.645} & \textbf{0.354} & \textbf{0.242} &  \textbf{0.293} & \textbf{0.390} & \textbf{0.411} & \textbf{0.368} & \textbf{0.388} & \textbf{0.465} & \textbf{0.461} & \textbf{0.512} & 0.490 \\
        \midrule
        \multirow{2}{*}{PatchTST} & Attention & 0.189 & 0.276 & 0.454 & 0.286 & 0.256 & 0.279 & 0.396 & 0.406 & 0.287 & \textbf{0.330} & 0.453 & 0.446 & 0.385 & 0.410 \\
        & CASA & \textbf{0.186} & \textbf{0.273} & \textbf{0.440} & \textbf{0.280} & \textbf{0.253} & \textbf{0.277} & \textbf{0.386} & \textbf{0.402} & \textbf{0.285} & 0.332 & \textbf{0.452} & \textbf{0.443} & \textbf{0.365} & \textbf{0.399} \\
        \midrule
        \multirow{2}{*}{iTransformer} & Attention & 0.178 & 0.270 & 0.428 & 0.282 & 0.258 & 0.278 & 0.407 & 0.410 & 0.288 & 0.332 & 0.454 & 0.447 & 0.383 & 0.407 \\
        & CASA & \textbf{0.168} & \textbf{0.259} & \textbf{0.421} & \textbf{0.261} & \textbf{0.244} & \textbf{0.267} & \textbf{0.386} & \textbf{0.393} & \textbf{0.276} & \textbf{0.319} & \textbf{0.438} & \textbf{0.434} & \textbf{0.374} & \textbf{0.397} \\
        \bottomrule
    \end{tabular}
    \caption{The performance of CASA across three distinct Transformer-based models, each employing different tokenization techniques. The standard self-attention module is replaced with our CASA. Among the 42 metrics assessed, CASA demonstrated improvements in 40 of them.}
    \label{tab:model_generalization}
\end{table*}

\section{Experiments}

\paragraph{Dataset} We conduct our comprehensive experiment on 8 benchmark datasets \cite{zhou2021informer}, such as Traffic, ETT series including 4 subsets (ETTh1, ETTh2, ETTm1, ETTm2), Weather, Solar, Electricity. More detailed information on Dataset is described in Appendix A.

%

\paragraph{Baselines} We chose totally 8 comtemporarlly baseline models, including SOFTS \cite{han2024softs}, iTransformer \cite{liu2023itransformer}, PatchTST \cite{nie2022time} , TSMixer \cite{ekambaram2023tsmixer}, Crossformer \cite{zhang2023crossformer}, TiDE \cite{das2023long} , DLinear \cite{zeng2023transformers},  FEDformer \cite{zhou2022fedformer}.

\begin{figure*}[h!]
    \centering
    \includegraphics[width=0.95\textwidth]{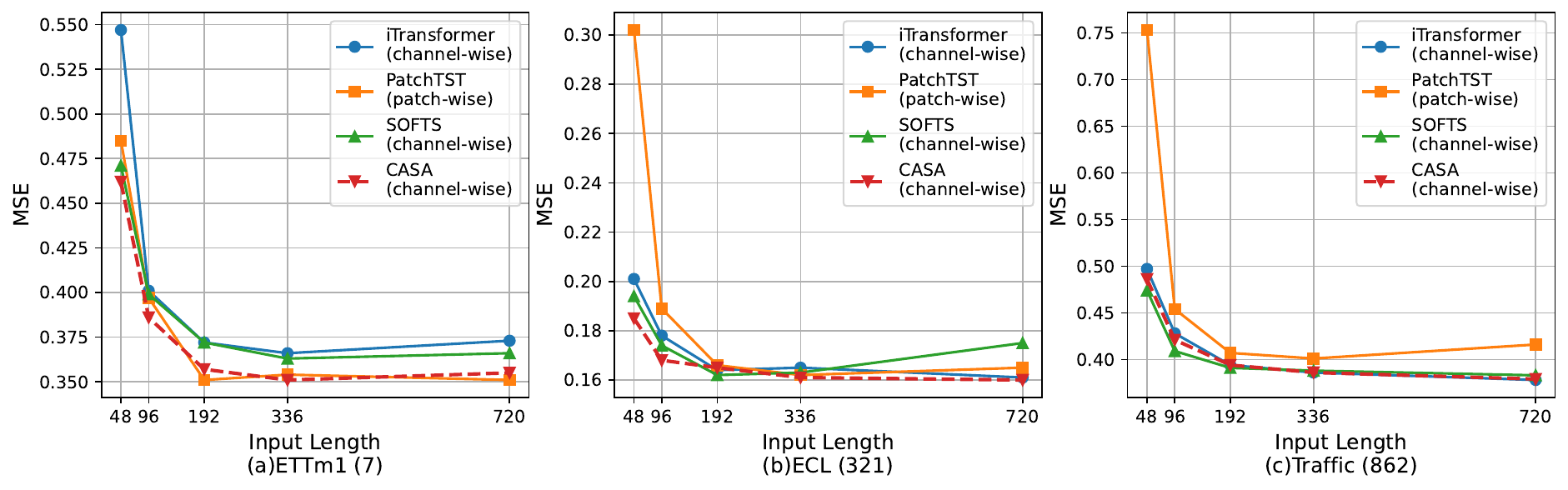} 
    \caption{Experimental results on the ETTm1, Electricity, and Traffic datasets (with 7, 321, and 862 variates, respectively). Our CASA remains robust across varying input and prediction lengths (48 to 720). Unlike PatchTST, which struggles as the number of variates increases, models like iTransformer and SOFTS, which tokenize variates, exhibit stronger performance.}
    \label{fig:input length ablation}
\end{figure*}

\paragraph{Setup}
Our comprehensive experiments results are based on MSE (Mean Squared Error) and MAE (Mean Absolute Error) metrics. Our main experiments are conducted on the conditions with $L=96$ and the $H \in \{96, 192, 336, 720\}$.

\subsection{Multivariate Forecasting Results}
The main results are presented in Table \ref{tab:ett_comparison}, where red-bold text indicates the best score and blue-underlined text represents the second-best score. CASA demonstrates the lowest MSE and MAE losses across 8 benchmark datasets, surpassing the previous state-of-the-art model, SOFTS, by a substantial margin. Additionally, the second-lowest performance scores exhibit a smaller gap from the best score compared to the others. Notably, the proposed model showcases its robustness on relatively large datasets, such as Traffic, Weather, and Solar, highlighting its ability to capture complex correlations, which significantly enhances the model's predictive performance. 
\begin{figure*}[h!]
    \centering
    \includegraphics[width=0.95\textwidth]{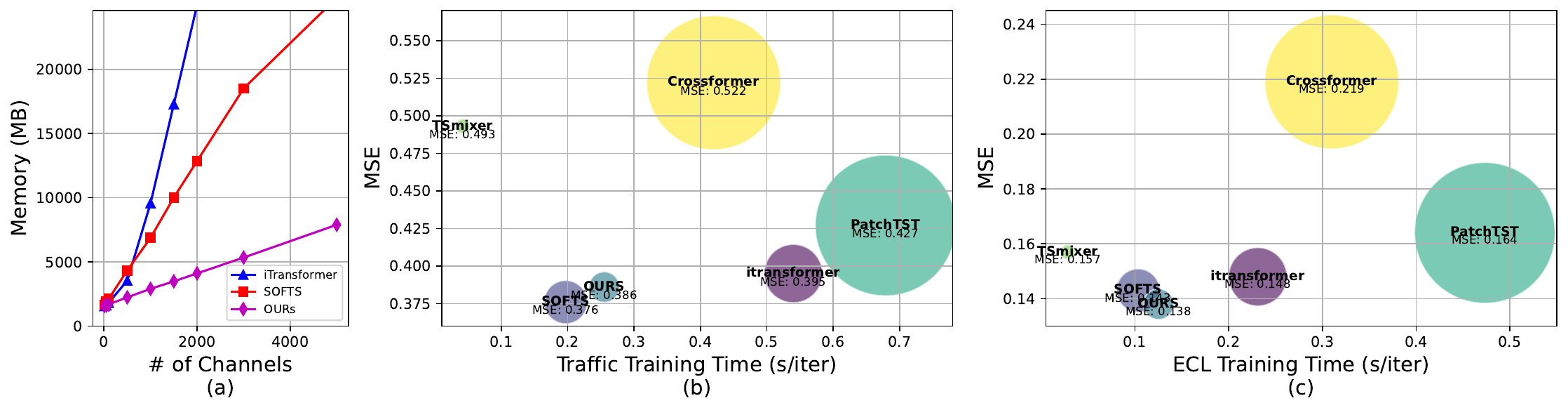} 
    \caption{(a) Memory usage scaling with the number of tokens, demonstrating CASA's linear growth and reduced memory consumption compared to SOFTS. (b, c) Experimental results on Traffic and Electricity datasets (batch size: 16, input/prediction length: 96), highlighting CASA's low memory usage and balanced trade-off between speed and performance.}
    \label{fig:efficiency}
\end{figure*}

Figure \ref{fig:prediction} visualizes the prediction performance of weather dataset from CASA, SOFTS, iTransformer, and PatchTST models against the ground truth. CASA shows the closest alignment with the label, while iTransformer shows similar tendency along the ground truth with large deviation. SOFTS and PatchTST prediction slightly detours the label. The full results of different prediction lengths and the visualization results on the rest of the datasets are demonstrated in Appendix B and Appendix C, respectively. 

\subsection{Superiority Analysis of CASA}
\paragraph{Replacing Self-Attention with CASA}
To ensure the proposed model's adaptability to diverse tokenization techniques, we integrate CASA into various Transformer variants including the vanilla Transformer, PatchTST, and iTransformer. Especially for the vanilla Transformer, we only use the encoder architecture to appropriately compare the effectiveness of CASA. 

The results on seven benchmark datasets are presented in Table \ref{tab:model_generalization}. With the exception of two specific cases—ETTm2 compared to PatchTST and ETTh2 compared to the vanilla Transformer—CASA consistently enhances the performance of the original and variants models, achieving improvements in 40 out of 42 results across the benchmarks. These findings not only demonstrate that replacing self-attention with CASA significantly boosts forecasting accuracy, but also highlight its flexibility and adaptability, as it can seamlessly integrate with diverse tokenization techniques, making it a versatile enhancement for Transformer-based architectures.
\vspace{-3pt}
\begin{figure}[h!]
    \centering
    \includegraphics[width=0.75\columnwidth]{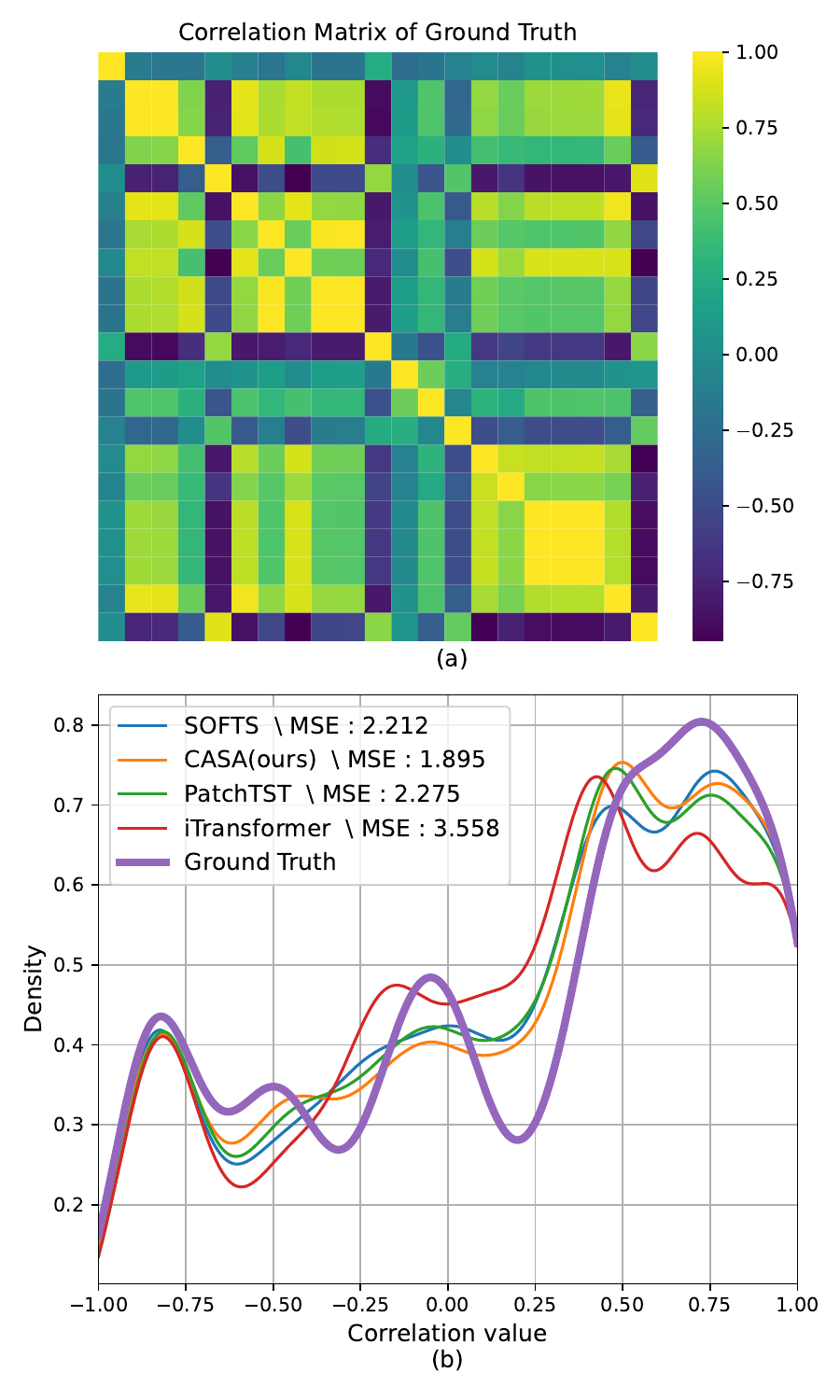} 
    \caption{(a) Correlation matrix among the Ground Truth variates, computed on the Weather Dataset. (b) PDFs of the Ground Truth and each model, computed using KDE.}
    \label{fig:corr_merge}
\end{figure}

\begin{table}[h!]
    \centering
    \renewcommand\thesubfigure{(\alph{subfigure})}
    \setlength{\tabcolsep}{3.0pt}
    \setlength{\abovecaptionskip}{0.2pt} 
    \setlength{\belowcaptionskip}{0.2pt} 
    \begin{tabular}{lcccc}
        \toprule
        \textbf{Model} & \textbf{MSE($\mathbb{\downarrow}$)} & \textbf{Cosine Similarity($\mathbb{\uparrow}$)} & \textbf{SSIM($\mathbb{\uparrow}$)} \\
        \midrule
        CASA(ours) & \textbf{1.1204} & \textbf{0.9965} & \textbf{0.9912} \\
        SOFTS & \underline{1.2520} & 0.9957 & 0.9889 \\
        iTransformer & 1.8675 & 0.9910 & 0.9791 \\
        PatchTST & 1.2393 & \underline{0.9960} & \underline{0.9896} \\
    \bottomrule
    \end{tabular}
    \caption{Metrics for the correlation matrices of the Ground Truth and each model. Boldface indicates the best performance, and underlining indicates the second-best performance.}
    \label{tab:corr_matrix2}
\end{table}

\paragraph{Robustness of CASA under varying conditions}
 To validate the robustness of CASA with respect to input length and the number of variates, we conducted experiments on the ETTm1, Electricity, and Traffic datasets, which contain 7, 321, and 862 variates, respectively, with prediction lengths ranging from 48 to 720. As shown in Figure \ref{fig:input length ablation}, the performance of models utilizing non-channel-wise tokenization declines as the number of variates increases. Specifically, PatchTST experiences a significant performance drop on the Traffic dataset, recording the highest MSE losses. In contrast, models such as iTransformer and SOFTS, which employ channel-wise tokenization, demonstrate greater resilience to increases in the number of variates. However, both models exhibit elevated MSE losses on the ETTm1 dataset, while their performance improves on the Electricity and Traffic datasets. In comparison, our proposed model maintains stable MSE losses across all three datasets, achieving notably lower MSE losses on the ETTm1 dataset. This underscores CASA's ability to deliver high predictive accuracy under varying conditions, ensuring consistent performance even as the input data length increases.

\subsection{Model Efficiency Analysis}\label{sec:EfficiencyAnalysis}

In this section, we empirically validate the efficiency of CASA, as theoretically outlined in Section \ref{sec:theoritical proof}. For comparison, we use iTransformer and SOFTS as baselines. Figure \ref{fig:efficiency} (a) depicts memory usage, revealing that CASA exhibits linear complexity and effectively leverages practical computational resources. This performance is comparable to SOFTS, which also demonstrates similar complexity but increases memory usage significantly. Notably, CASA significantly outperforms iTransformer, which suffers from quadratic memory growth. Figures \ref{fig:efficiency} (b) and (c) present memory footprints, inference time, and MSE for the Traffic and Electricity datasets, using a batch size of 16 and input/inference sequence lengths of 96. CASA consumes fewer computational resources than Transformer variants such as iTransformer, Crossformer, and PatchTST. Regarding MSE, CASA achieves the second-lowest value on Traffic and the lowest on Electricity, all while maintaining fast inference and efficient resource usage. Although CASA slightly exceeds TSMixer in memory usage, it delivers stronger overall performance, striking an effective balance between accuracy and resource efficiency.

\vspace{-3pt}

\subsection{Investigating Cross-Dimensional Interactions: A Correlation Matrix Analysis of CASA}\label{sec:dist}

We evaluate CASA’s capacity to capture cross-dimensional interactions by examining the correlation matrices derived from each model’s predictions (i.e., correlations among all variates) on the Weather Dataset, comparing outcomes from CASA, SOFTS, iTransformer, and PatchTST against the Ground Truth. The Ground Truth correlation matrix shows distinct positive and negative correlation blocks, indicating a clear spatial structure with a grid-like arrangement (Figure \ref{fig:corr_merge} (a)). Furthermore, kernel density estimation (KDE) using a Gaussian kernel reveals that correlation values are mostly concentrated in the positive domain (Figure \ref{fig:corr_merge} (b)), suggesting many variates rise or fall in sync. Notably, CASA’s correlation matrix most closely approximates the Ground Truth, as evidenced by the lowest MSE loss between their probability density functions (PDFs) (Figure \ref{fig:corr_merge} (b)). This underscores CASA’s ability to preserve intricate relationships essential for capturing temporal and spatial dependencies in weather data. For a more comprehensive evaluation, we compare each model’s correlation matrix to the Ground Truth using Mean Squared Error (MSE), cosine similarity, and the Structural Similarity Index Measure (SSIM). CASA outperforms all other models across these metrics, validating its effectiveness in capturing cross-dimensional interactions (Table \ref{tab:corr_matrix2}).

\vspace{-3pt}

\section{Conclusion}
In this study, we introduce the CASA model, which demonstrates remarkable effectiveness and solid predictive performance, as confirmed by a broad range of experiments. By delivering state-of-the-art results while requiring notably fewer computational resources and less processing time than existing approaches. Moreover, its CNN-based autoencoder module successfully captures cross-dimensional interactions throughout the compress-and-decompress procedure, which contributes to its outstanding performance. Additionally, CASA shows strong potential as a versatile alternative to conventional attention mechanisms in various Transformer configurations, remaining unaffected by different tokenization methods. This further highlights its adaptability, practicality, and efficiency across diverse use cases.



\vspace{-3pt}

\section*{Acknowledgments}

This work was partly supported by Institute of Information \& communications Technology Planning \& Evaluation (IITP) grant funded by the Korea government (MSIT) [NO.RS-2021-II211343, Artificial Intelligence Graduate School Program (Seoul National University)] and the National Research Foundation of Korea (NRF) [RS-2024-00421203, RS-2024-00406127, RS-2021-NR059802]

\bibliographystyle{named}
\bibliography{ijcai24}

\begin{thebibliography}{}

\bibitem[\protect\citeauthoryear{Bengio \bgroup \em et al.\egroup }{2013}]{bengio2013representation}
Yoshua Bengio, Aaron Courville, and Pascal Vincent.
\newblock Representation learning: A review and new perspectives.
\newblock {\em IEEE transactions on pattern analysis and machine intelligence}, 35(8):1798--1828, 2013.

\bibitem[\protect\citeauthoryear{Das \bgroup \em et al.\egroup }{2023}]{das2023long}
Abhimanyu Das, Weihao Kong, Andrew Leach, Shaan Mathur, Rajat Sen, and Rose Yu.
\newblock Long-term forecasting with tide: Time-series dense encoder.
\newblock {\em arXiv preprint arXiv:2304.08424}, 2023.

\bibitem[\protect\citeauthoryear{Ekambaram \bgroup \em et al.\egroup }{2023}]{ekambaram2023tsmixer}
Vijay Ekambaram, Arindam Jati, Nam Nguyen, Phanwadee Sinthong, and Jayant Kalagnanam.
\newblock Tsmixer: Lightweight mlp-mixer model for multivariate time series forecasting.
\newblock In {\em Proceedings of the 29th ACM SIGKDD Conference on Knowledge Discovery and Data Mining}, pages 459--469, 2023.

\bibitem[\protect\citeauthoryear{Gonzales \bgroup \em et al.\egroup }{2023}]{gonzales2023synthetic}
Aldren Gonzales, Guruprabha Guruswamy, and Scott~R Smith.
\newblock Synthetic data in health care: A narrative review.
\newblock {\em PLOS Digital Health}, 2(1):e0000082, 2023.

\bibitem[\protect\citeauthoryear{Han \bgroup \em et al.\egroup }{2024}]{han2024softs}
Lu~Han, Xu-Yang Chen, Han-Jia Ye, and De-Chuan Zhan.
\newblock Softs: Efficient multivariate time series forecasting with series-core fusion.
\newblock {\em In the twelfth international conference on learning representations}, 2024.

\bibitem[\protect\citeauthoryear{Hao \bgroup \em et al.\egroup }{2024}]{hao2401synthetic}
S~Hao, W~Han, T~Jiang, Y~Li, H~Wu, C~Zhong, Z~Zhou, and H~Tang.
\newblock Synthetic data in ai: Challenges, applications, and ethical implications. arxiv 2024.
\newblock {\em arXiv preprint arXiv:2401.01629}, 2024.

\bibitem[\protect\citeauthoryear{Ji \bgroup \em et al.\egroup }{2023}]{ji2023spatio}
Jiahao Ji, Jingyuan Wang, Chao Huang, Junjie Wu, Boren Xu, Zhenhe Wu, Junbo Zhang, and Yu~Zheng.
\newblock Spatio-temporal self-supervised learning for traffic flow prediction.
\newblock In {\em Proceedings of the AAAI conference on artificial intelligence}, volume~37, pages 4356--4364, 2023.

\bibitem[\protect\citeauthoryear{Lai \bgroup \em et al.\egroup }{2018}]{lai2018modeling}
Guokun Lai, Wei-Cheng Chang, Yiming Yang, and Hanxiao Liu.
\newblock Modeling long-and short-term temporal patterns with deep neural networks.
\newblock In {\em The 41st international ACM SIGIR conference on research \& development in information retrieval}, pages 95--104, 2018.

\bibitem[\protect\citeauthoryear{Liu \bgroup \em et al.\egroup }{2022a}]{scinet}
Minhao Liu, Ailing Zeng, Muxi Chen, Zhijian Xu, Qiuxia Lai, Lingna Ma, and Qiang Xu.
\newblock Scinet: Time series modeling and forecasting with sample convolution and interaction.
\newblock {\em Advances in Neural Information Processing Systems}, 2022.

\bibitem[\protect\citeauthoryear{Liu \bgroup \em et al.\egroup }{2022b}]{liu2022pyraformer}
Shizhan Liu, Hang Yu, Cong Liao, Jianguo Li, Weiyao Lin, Alex~X Liu, and Schahram Dustdar.
\newblock Pyraformer: Low-complexity pyramidal attention for long-range time series modeling and forecasting.
\newblock 2022.

\bibitem[\protect\citeauthoryear{Liu \bgroup \em et al.\egroup }{2022c}]{liu2022non}
Yong Liu, Haixu Wu, Jianmin Wang, and Mingsheng Long.
\newblock Non-stationary transformers: Exploring the stationarity in time series forecasting.
\newblock {\em Advances in Neural Information Processing Systems}, 35:9881--9893, 2022.

\bibitem[\protect\citeauthoryear{Liu \bgroup \em et al.\egroup }{2023}]{liu2023itransformer}
Yong Liu, Tengge Hu, Haoran Zhang, Haixu Wu, Shiyu Wang, Lintao Ma, and Mingsheng Long.
\newblock itransformer: Inverted transformers are effective for time series forecasting.
\newblock {\em the eleventh international conference on learning representations}, 2023.

\bibitem[\protect\citeauthoryear{Nie \bgroup \em et al.\egroup }{2022}]{nie2022time}
Yuqi Nie, Nam~H Nguyen, Phanwadee Sinthong, and Jayant Kalagnanam.
\newblock A time series is worth 64 words: Long-term forecasting with transformers.
\newblock {\em In the tenth international conference on learning representations}, 2022.

\bibitem[\protect\citeauthoryear{Vaswani}{2017}]{vaswani2017attention}
A~Vaswani.
\newblock Attention is all you need.
\newblock {\em Advances in Neural Information Processing Systems}, 2017.

\bibitem[\protect\citeauthoryear{Wilson \bgroup \em et al.\egroup }{2016}]{wilson2016deep}
Andrew~Gordon Wilson, Zhiting Hu, Ruslan Salakhutdinov, and Eric~P Xing.
\newblock Deep kernel learning.
\newblock In {\em Artificial intelligence and statistics}, pages 370--378. PMLR, 2016.

\bibitem[\protect\citeauthoryear{Wu \bgroup \em et al.\egroup }{2021}]{wu2021autoformer}
Haixu Wu, Jiehui Xu, Jianmin Wang, and Mingsheng Long.
\newblock Autoformer: Decomposition transformers with auto-correlation for long-term series forecasting.
\newblock {\em Advances in neural information processing systems}, 34:22419--22430, 2021.

\bibitem[\protect\citeauthoryear{Wu \bgroup \em et al.\egroup }{2023}]{wu2022timesnet}
Haixu Wu, Tengge Hu, Yong Liu, Hang Zhou, Jianmin Wang, and Mingsheng Long.
\newblock Timesnet: Temporal 2d-variation modeling for general time series analysis.
\newblock {\em In the eleventh international conference on learning representations}, 2023.

\bibitem[\protect\citeauthoryear{Yu \bgroup \em et al.\egroup }{2024}]{yu2024revitalizing}
Guoqi Yu, Jing Zou, Xiaowei Hu, Angelica~I Aviles-Rivero, Jing Qin, and Shujun Wang.
\newblock Revitalizing multivariate time series forecasting: Learnable decomposition with inter-series dependencies and intra-series variations modeling.
\newblock {\em International conference on machine learning}, 2024.

\bibitem[\protect\citeauthoryear{Zeng \bgroup \em et al.\egroup }{2023}]{zeng2023transformers}
Ailing Zeng, Muxi Chen, Lei Zhang, and Qiang Xu.
\newblock Are transformers effective for time series forecasting?
\newblock In {\em Proceedings of the AAAI conference on artificial intelligence}, volume~37, pages 11121--11128, 2023.

\bibitem[\protect\citeauthoryear{Zhang and Yan}{2023}]{zhang2023crossformer}
Yunhao Zhang and Junchi Yan.
\newblock Crossformer: Transformer utilizing cross-dimension dependency for multivariate time series forecasting.
\newblock In {\em the eleventh international conference on learning representations}, 2023.

\bibitem[\protect\citeauthoryear{Zhou \bgroup \em et al.\egroup }{2021}]{zhou2021informer}
Haoyi Zhou, Shanghang Zhang, Jieqi Peng, Shuai Zhang, Jianxin Li, Hui Xiong, and Wancai Zhang.
\newblock Informer: Beyond efficient transformer for long sequence time-series forecasting.
\newblock In {\em Proceedings of the AAAI conference on artificial intelligence}, volume~35, pages 11106--11115, 2021.

\bibitem[\protect\citeauthoryear{Zhou \bgroup \em et al.\egroup }{2022}]{zhou2022fedformer}
Tian Zhou, Ziqing Ma, Qingsong Wen, Xue Wang, Liang Sun, and Rong Jin.
\newblock Fedformer: Frequency enhanced decomposed transformer for long-term series forecasting.
\newblock In {\em International conference on machine learning}, pages 27268--27286. PMLR, 2022.

\end{thebibliography}


\clearpage
\vspace{1em}
\noindent\textbf{\LARGE Appendix}
\vspace{1em}
\renewcommand{\thesection}{\Alph{section}}
\setcounter{section}{0}
\section{Dataset} \label{Apen:A}
\subsection{Details of Each Dataset}
Data preprocessing is largely based on previous work SOFTS. We provide the dataset descriptions:

\begin{itemize}
    \item \textbf{ETT (Electricity Transformer Temperature)}: This dataset includes two hourly (ETTh) and two 15-minute (ETTm) variations, containing seven features related to oil and load measurements of electricity transformers, collected from July 2016 to July 2018.
    \item \textbf{Traffic}: This dataset represents hourly road occupancy rates gathered by sensors monitoring San Francisco freeways from 2015 to 2016.
    \item \textbf{Electricity}: It consists of hourly electricity usage data for 321 consumers, spanning from 2012 to 2014.
    \item \textbf{Weather}: it contains 21 weather-related variables, such as temperature and humidity, recorded every 10 minutes throughout 2020 in Germany.
    \item \textbf{Solar-Energy}: It captures solar power generation data from 137 photovoltaic plants in 2006, sampled every 10 minutes.
\end{itemize}
Further details of these datasets are summarized in Table \ref{tab:datasets}.

\begin{table}[h!]
    \centering
    \begin{tabular}{lcccc}
        \hline
        \textbf{Dataset} & \textbf{Timesteps} & \textbf{Sample Frequency} & \textbf{Dimension} \\
        \hline
        Weather & 52,696 & 10 min & 21 \\
        Electricity & 26,304 & 1 hour & 321 \\
        Traffic & 17,544 & 1 hour & 862 \\
        Solar & 52,560 & 10 min & 137 \\
        ETTh1 & 17,420 & 1 hour & 7 \\
        ETTh2 & 17,420 & 1 hour & 7 \\
        ETTm1 & 69,680 & 15 min & 7 \\
        ETTm2 & 69,680 & 15 min & 7 \\
        \hline
    \end{tabular}
    \caption{Characteristics of the datasets including timesteps, sample frequency, and dimensions. Dimensions denotes the number of variate in each dataset.}
    \label{tab:datasets}
\end{table}

\subsection{Properties of datasets}
The limited size of datasets in the LTSF domain reflects the inherent constraints of real-world benchmark datasets. However, the use of synthetic data is not feasible due to critical concerns about data reliability and authenticity \cite{hao2401synthetic,gonzales2023synthetic}. These limitations highlight the importance of studying LTSF under such constraints, as it addresses practical challenges and ensures that the models remain applicable.

\section{Full Results} \label{Apen:B}

We fix the input length to $L=96$ and evaluated the prediction results across 8 real-world datasets for $H \in \{96, 192, 336, 720 \}$. For each dataset, four MSE and MAE is calculated, resulting in a total of 64 metrics. Among these, CASA achieves the best performance in 54 cases (84.4\%). Detailed results are presented in Table \ref{tab:full results}.

\section{Visualization Results} \label{Apen:C}
Figure \ref{fig:prediction_etth1} to Figure \ref{fig:prediction_ecl} visualize the prediction performance of weather dataset from CASA, SOFTS, iTransformer, and PatchTST models against the ground truth. The both input length and the prediction length are fixed to 96.

\begin{figure*}[h!]
    \centering
    \includegraphics[width=0.95\textwidth]{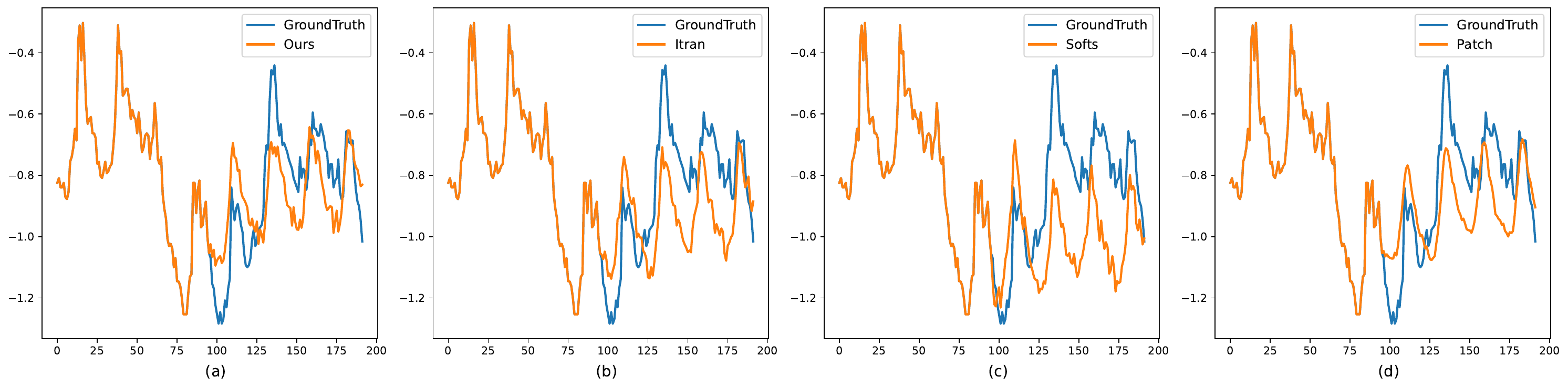} 
    \caption{Results of prediction of Ours and baseline models on Etth1 dataset}
    \label{fig:prediction_etth1}
\end{figure*}

\begin{figure*}[h!]
    \centering
    \includegraphics[width=0.95\textwidth]{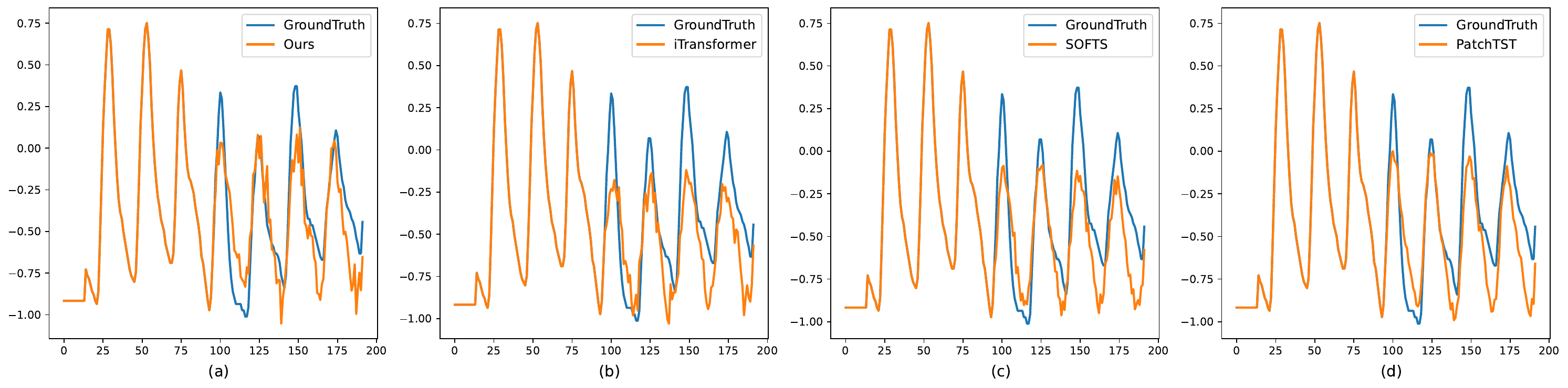} 
    \caption{Results of prediction of Ours and baseline models on Etth2 dataset}
    \label{fig:prediction_etth2}
\end{figure*}

\begin{figure*}[h!]
    \centering
    \includegraphics[width=0.95\textwidth]{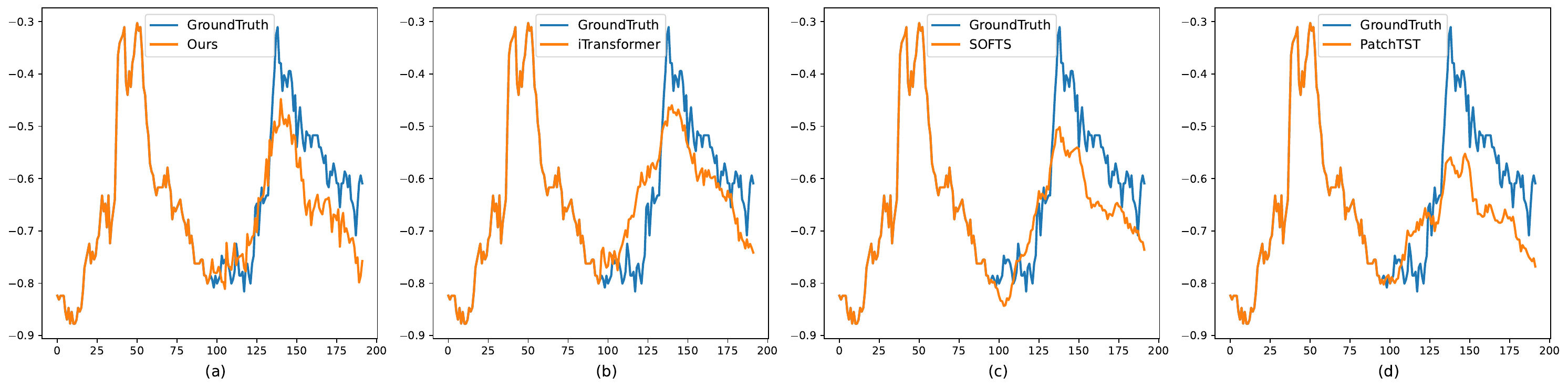} 
    \caption{Results of prediction of Ours and baseline models on Ettm1 dataset}
    \label{fig:prediction_ettm1}
\end{figure*}

\begin{figure*}[h!]
    \centering
    \includegraphics[width=0.95\textwidth]{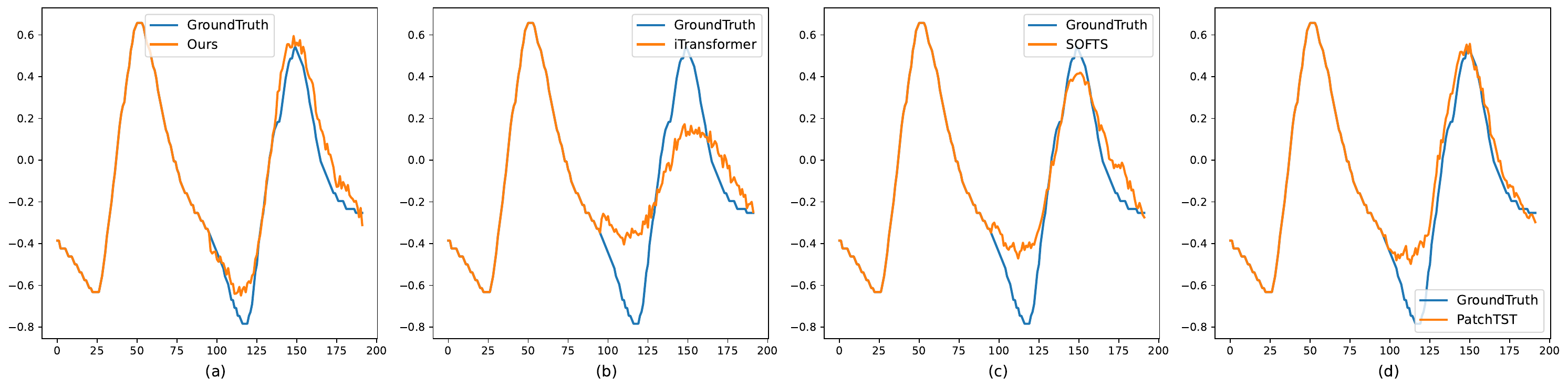} 
    \caption{Results of prediction of Ours and baseline models on Ettm2 dataset}
    \label{fig:prediction_ettm2}
\end{figure*}

\begin{figure*}[h!]
    \centering
    \includegraphics[width=0.95\textwidth]{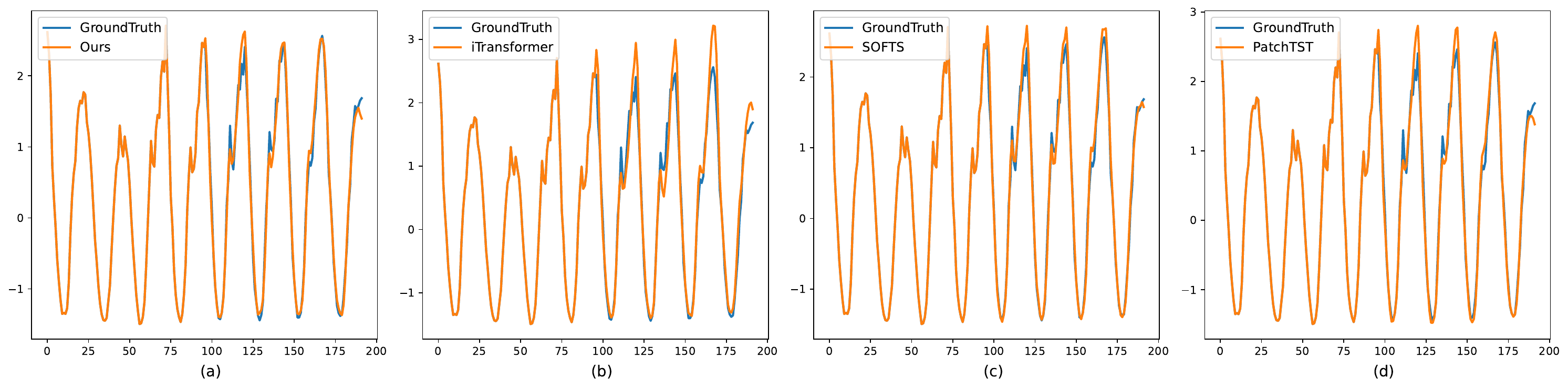} 
    \caption{Results of prediction of Ours and baseline models on Traffic dataset}
    \label{fig:prediction_traffic}
\end{figure*}

\begin{figure*}[h!]
    \centering
    \includegraphics[width=0.95\textwidth]{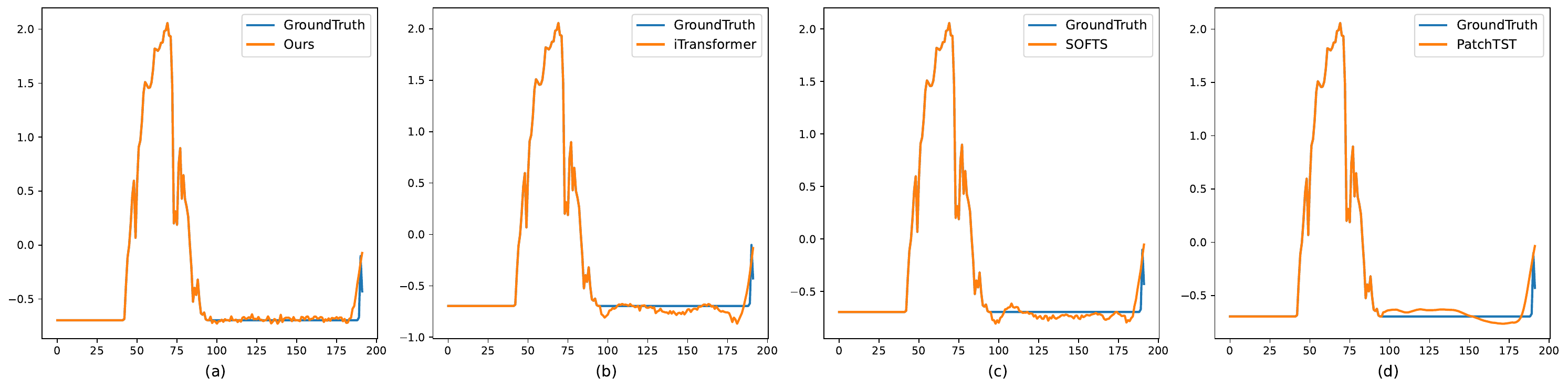} 
    \caption{Results of prediction of Ours and baseline models on Solar dataset}
    \label{fig:prediction_solar}
\end{figure*}

\begin{figure*}[h!]
    \centering
    \includegraphics[width=0.95\textwidth]{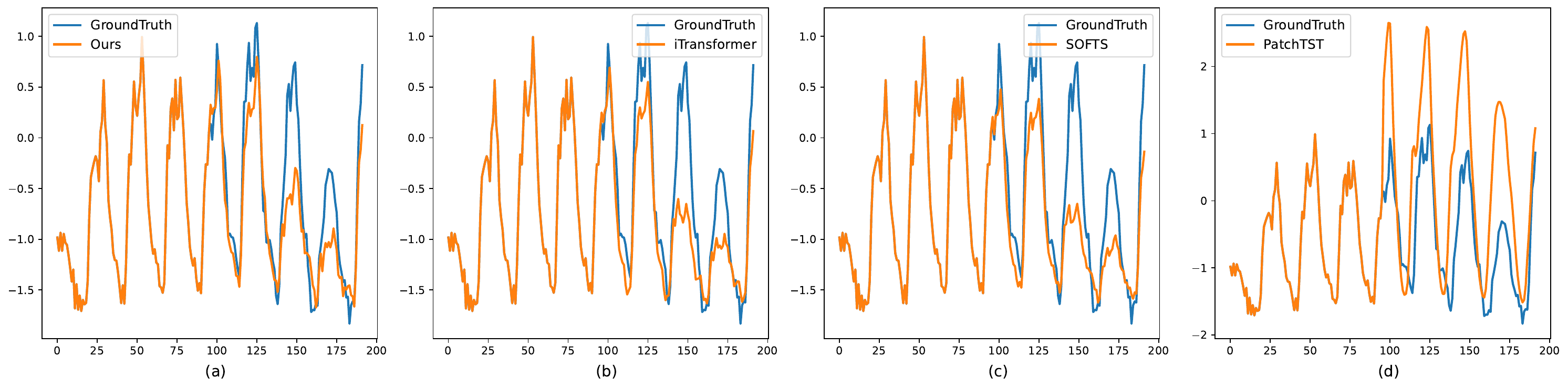} 
    \caption{Results of prediction of Ours and baseline models on Electricity dataset}
    \label{fig:prediction_ecl}
\end{figure*}

\section{Implementation Details in Section 3.4.} \label{Apen:D}
In the experiments conducted on the Traffic dataset, $N$ is set to 862. The parameters $L$ and $H$ are configured as 512 and 96, respectively, with a batch size of 16. CASA and SOFTS exhibit the same complexity with respect to $N$, $L$, and $H$, but their experimental results show slight variations due to differences in hyperparameter settings, such as the hidden dimension and kernel size. Except for CASA and SOFTS, all models include a quadratic term. However, iTransformer employs channel-wise tokenization, resulting in a quadratic term with respect to $N$, whereas PatchTST and Transformer exhibit a quadratic term with respect to $L$. Given that $1.5 \times L \leq N$ in the training setup, iTransformer is less efficient compared to Transformer. Additionally, PatchTST involves an $NL^2$ term, leading to the highest computational complexity as the input length and the number of variates increase.

\begin{table*}[ht]
    \centering
    \tiny  
    \setlength{\tabcolsep}{2.5pt}  
    \begin{tabular}{c|c|cc|cc|cc|cc|cc|cc|cc|cc|cc|cc|cc|cc}
        \toprule
        \multirow{2}{*}{Metric} & \multirow{2}{*}{Models} & 
        \multicolumn{2}{c|}{CASA(ours)} & \multicolumn{2}{c|}{SOFTS} & \multicolumn{2}{c|}{iTransformer} & \multicolumn{2}{c|}{PatchTST} & \multicolumn{2}{c|}{TSMixer} & \multicolumn{2}{c|}{Crossformer} & \multicolumn{2}{c|}{TiDE} & \multicolumn{2}{c|}{TimesNet} & \multicolumn{2}{c|}{Stationary} & \multicolumn{2}{c|}{DLinear} & \multicolumn{2}{c|}{SCINet} & \multicolumn{2}{c}{FEDformer} \\
        \cline{3-26}
        & & MSE & MAE & MSE & MAE & MSE & MAE & MSE & MAE & MSE & MAE & MSE & MAE & MSE & MAE & MSE & MAE & MSE & MAE & MSE & MAE & MSE & MAE & MSE & MAE \\
        \midrule
        \multirow{5}{*}{ETTm1} & 96 & \textbf{0.313} & \textbf{0.358} & 0.325 & \underline{0.361} & 0.334 & 0.368 & 0.329 & 0.365 & \underline{0.323} & 0.363 & 0.404 & 0.426 & 0.364 & 0.387 & 0.338 & 0.375 & 0.386 & 0.398 & 0.345 & 0.372 & 0.418 & 0.438 & 0.379 & 0.419 \\
        & 192 & \textbf{0.375} & \textbf{0.382} & 0.377 & 0.391 & 0.380 & 0.394 & 0.376 & 0.392 & 0.450 & 0.451 & 0.398 & 0.404 & \underline{0.374} & \underline{0.387} & 0.426 & 0.441 & 0.459 & 0.444 & 0.380 & 0.389 & 0.439 & 0.450 & 0.426 & 0.441 \\
        & 336 & \textbf{0.397} & \textbf{0.398} & 0.426 & 0.420 & \underline{0.400} & \underline{0.410} & 0.407 & 0.413 & 0.532 & 0.515 & 0.428 & 0.425 & 0.410 & 0.411 & 0.445 & 0.459 & 0.495 & 0.46 & 0.413 & 0.413 & 0.490 & 0.485 & 0.445 & 0.459 \\
        & 720 & \textbf{0.459} & \textbf{0.437} & \underline{0.466} & \underline{0.447} & 0.491 & 0.459 & 0.475 & 0.453 & 0.485 & 0.459 & 0.666 & 0.589 & 0.487 & 0.461 & 0.478 & 0.450 & 0.585 & 0.516 & 0.474 & 0.453 & 0.595 & 0.550 & 0.543 & 0.490 \\
        \cmidrule{2-26}
        & Avg & \textbf{0.386} & \textbf{0.393} & \underline{0.393} & \underline{0.403} & 0.407 & 0.410 & 0.396 & 0.406 & 0.398 & 0.407 & 0.513 & 0.496 & 0.419 & 0.419 & 0.400 & 0.406 & 0.481 & 0.456 & 0.474 & 0.453 & 0.595 & 0.550 & 0.543 & 0.490 \\
        \midrule
        \multirow{5}{*}{ETTm2} & 96 & \textbf{0.173} & \textbf{0.252} & \underline{0.180} & \underline{0.261} & \underline{0.180} & 0.264 & 0.184 & 0.264 & 0.182 & 0.266 & 0.287 & 0.366 & 0.207 & 0.305 & 0.187 & 0.267 & 0.192 & 0.274 & 0.193 & 0.292 & 0.286 & 0.377 & 0.203 & 0.287 \\
        & 192 & \textbf{0.240} & \textbf{0.298} & \underline{0.246} & \underline{0.306} & 0.250 & 0.309 & \underline{0.246} & \underline{0.306} & 0.249 & 0.309 & 0.414 & 0.492 & 0.290 & 0.364 & 0.249 & 0.309 & 0.280 & 0.339 & 0.284 & 0.362 & 0.399 & 0.445 & 0.269 & 0.328 \\
        & 336 & \textbf{0.296} & \textbf{0.334} & 0.319 & 0.352 & 0.311 & 0.348 & \underline{0.308} & \underline{0.346} & 0.309 & 0.347 & 0.597 & 0.542 & 0.377 & 0.422 & 0.321 & 0.351 & 0.334 & 0.361 & 0.369 & 0.427 & 0.637 & 0.591 & 0.325 & 0.366 \\
        & 720 & \textbf{0.395} & \textbf{0.392} & \underline{0.405} & \underline{0.401} & 0.412 & 0.407 & 0.409 & 0.402 & 0.416 & 0.408 & 1.730 & 1.042 & 0.558 & 0.524 & 0.408 & 0.403 & 0.417 & 0.413 & 0.554 & 0.522 & 0.960 & 0.735 & 0.421 & 0.415 \\
        \cmidrule{2-26}
        & Avg & \textbf{0.276} & \textbf{0.319} & \underline{0.287} & \underline{0.330} & 0.288 & 0.332 & \underline{0.287} & \underline{0.330} & 0.289 & 0.333 & 0.757 & 0.610 & 0.358 & 0.404 & 0.291 & 0.333 & 0.306 & 0.347 & 0.350 & 0.401 & 0.571 & 0.537 & 0.305 & 0.349 \\
        \midrule
        \multirow{5}{*}{ETTh1} & 96 & \textbf{0.376} & \textbf{0.397} & \underline{0.381} & \underline{0.399} & 0.386 & 0.405 & 0.394 & 0.406 & 0.401 & 0.412 & 0.423 & 0.448 & 0.479 & 0.464 & 0.384 & 0.402 & 0.513 & 0.491 & 0.386 & 0.400 & 0.654 & 0.599 & \textbf{0.376} & 0.419 \\
        & 192 & \underline{0.427} & \textbf{0.424} & 0.435 & \underline{0.431} & 0.441 & 0.436 & 0.440 & 0.435 & 0.452 & 0.442 & 0.471 & 0.474 & 0.525 & 0.492 & 0.436 & {0.429} & 0.534 & 0.504 & 0.437 & 0.432 & 0.719 & 0.631 & \textbf{0.420} & 0.448 \\
        & 336 & \underline{0.469} & \textbf{0.445} & 0.480 & \underline{0.452} & 0.487 & 0.458 & 0.491 & 0.462 & 0.492 & 0.463 & 0.570 & 0.546 & 0.565 & 0.515 & 0.491 & 0.469 & 0.588 & 0.535 & 0.481 & 0.459 & 0.778 & 0.659 & \textbf{0.459} & 0.465 \\
        & 720 & \textbf{0.479} & \textbf{0.468} & 0.499 & 0.488 & 0.503 & 0.491 & \underline{0.487} & \underline{0.479} & 0.507 & 0.490 & 0.653 & 0.621 & 0.594 & 0.558 & 0.521 & 0.500 & 0.643 & 0.616 & 0.519 & 0.516 & 0.836 & 0.699 & 0.506 & 0.507 \\
        \cmidrule{2-26}
        & Avg & \textbf{0.438} & \textbf{0.434} & 0.449 & \underline{0.442} & 0.454 & 0.447 & 0.453 & 0.446 & 0.463 & 0.452 & 0.529 & 0.522 & 0.541 & 0.507 & 0.458 & 0.450 & 0.570 & 0.537 & 0.456 & 0.452 & 0.747 & 0.647 & \underline{0.440} & 0.460 \\
        \midrule
        \multirow{5}{*}{ETTh2} & 96 & \underline{0.290} & \textbf{0.339} & 0.297 & 0.347 & 0.297 & 0.349 & \textbf{0.288} & \underline{0.340} & 0.319 & 0.361 & 0.745 & 0.584 & 0.400 & 0.440 & 0.340 & 0.374 & 0.476 & 0.458 & 0.333 & 0.387 & 0.707 & 0.621 & 0.358 & 0.397 \\
        & 192 & \textbf{0.366} & \textbf{0.388} & \underline{0.373} & \underline{0.394} & 0.380 & 0.400 & 0.376 & 0.395 & 0.402 & 0.410 & 0.877 & 0.656 & 0.528 & 0.509 & 0.402 & 0.414 & 0.512 & 0.493 & 0.477 & 0.476 & 0.860 & 0.689 & 0.429 & 0.439 \\
        & 336 & \underline{0.416} & \textbf{0.425} & \textbf{0.410} & \underline{0.426} & 0.428 & 0.432 & 0.440 & 0.451 & 0.444 & 0.446 & 1.043 & 0.731 & 0.643 & 0.571 & 0.452 & 0.452 & 0.552 & 0.551 & 0.594 & 0.541 & 1.000 & 0.744 & 0.496 & 0.487 \\
        & 720 & \underline{0.420} & \underline{0.439} & \textbf{0.411} & \textbf{0.433} & 0.427 & 0.445 & 0.436 & 0.453 & 0.441 & 0.450 & 1.104 & 0.763 & 0.874 & 0.679 & 0.462 & 0.468 & 0.562 & 0.560 & 0.831 & 0.657 & 1.249 & 0.838 & 0.463 & 0.474 \\
        \cmidrule{2-26}
        & Avg & \underline{0.374} & \textbf{0.397} & \textbf{0.373} & \underline{0.400} & 0.383 & \underline{0.407} & 0.385 & 0.410 & 0.401 & 0.417 & 0.942 & 0.684 & 0.611 & 0.550 & 0.414 & 0.427 & 0.526 & 0.516 & 0.559 & 0.515 & 0.954 & 0.723 & 0.437 & 0.449 \\
        \midrule
        \multirow{5}{*}{ECL} & 96 & \textbf{0.138} & \textbf{0.231} & \underline{0.143} & \underline{0.233} & 0.148 & 0.240 & 0.164 & 0.251 & 0.157 & 0.260 & 0.219 & 0.314 & 0.237 & 0.329 & 0.168 & 0.272 & 0.169 & 0.273 & 0.197 & 0.282 & 0.247 & 0.345 & 0.193 & 0.308 \\
        & 192 & \textbf{0.157} & \textbf{0.248} & \underline{0.158} & \textbf{0.248} & 0.162 & \underline{0.253} & 0.173 & 0.262 & 0.173 & 0.274 & 0.231 & 0.322 & 0.236 & 0.330 & 0.184 & 0.289 & 0.182 & 0.286 & 0.196 & 0.285 & 0.257 & 0.355 & 0.201 & 0.315 \\
        & 336 & \textbf{0.175} & \textbf{0.267} & \underline{0.178} & \underline{0.269} & \underline{0.178} & \underline{0.269} & 0.190 & 0.279 & 0.192 & 0.295 & 0.246 & 0.337 & 0.249 & 0.344 & 0.198 & 0.300 & 0.200 & 0.304 & 0.209 & 0.301 & 0.269 & 0.369 & 0.214 & 0.329 \\
        & 720 & \textbf{0.200} & \textbf{0.290} & \underline{0.218} & \underline{0.305} & 0.225 & 0.317 & 0.230 & 0.313 & 0.223 & 0.318 & 0.280 & 0.363 & 0.284 & 0.373 & 0.220 & 0.320 & 0.222 & 0.321 & 0.245 & 0.333 & 0.299 & 0.390 & 0.246 & 0.355 \\
        \cmidrule{2-26}
        & Avg & \textbf{0.168} & \textbf{0.259} & \underline{0.174} & \underline{0.264} & 0.178 & 0.270 & 0.189 & 0.276 & 0.186 & 0.287 & 0.244 & 0.334 & 0.251 & 0.344 & 0.192 & 0.295 & 0.193 & 0.296 & 0.212 & 0.300 & 0.268 & 0.365 & 0.214 & 0.327 \\
        \midrule
        \multirow{5}{*}{Traffic} & 96 & \underline{0.386} & \textbf{0.243} & \textbf{0.376} & \underline{0.251} & 0.395 & 0.268 & 0.427 & 0.272 & 0.493 & 0.336 & 0.522 & 0.290 & 0.805 & 0.493 & 0.593 & 0.321 & 0.612 & 0.338 & 0.650 & 0.396 & 0.788 & 0.499 & 0.587 & 0.366 \\
        & 192 & \underline{0.410} & \textbf{0.255} & \textbf{0.398} & \underline{0.261} & 0.417 & 0.276 & 0.454 & 0.289 & 0.497 & 0.351 & 0.530 & 0.293 & 0.756 & 0.474 & 0.617 & 0.336 & 0.613 & 0.340 & 0.598 & 0.370 & 0.789 & 0.505 & 0.604 & 0.373 \\
        & 336 & \underline{0.426} & \textbf{0.263} & \textbf{0.415} & \underline{0.269} & 0.433 & 0.283 & 0.450 & 0.282 & 0.528 & 0.361 & 0.558 & 0.305 & 0.762 & 0.477 & 0.629 & 0.336 & 0.618 & 0.328 & 0.605 & 0.373 & 0.797 & 0.508 & 0.621 & 0.383 \\
        & 720 & \underline{0.461} & \textbf{0.282} & \textbf{0.447} & \underline{0.287} & 0.467 & 0.302 & 0.484 & 0.301 & 0.569 & 0.380 & 0.589 & 0.328 & 0.719 & 0.449 & 0.640 & 0.350 & 0.653 & 0.355 & 0.645 & 0.394 & 0.841 & 0.523 & 0.626 & 0.382 \\
        \cmidrule{2-26}
        & Avg & \underline{0.421} & \textbf{0.261} & \textbf{0.409} & \underline{0.267} & 0.428 & 0.282 & 0.454 & 0.286 & 0.522 & 0.357 & 0.550 & 0.304 & 0.760 & 0.473 & 0.620 & 0.336 & 0.624 & 0.340 & 0.625 & 0.383 & 0.804 & 0.509 & 0.610 & 0.376 \\
        \midrule
        \multirow{5}{*}{Weather} & 96 & \textbf{0.155} & \textbf{0.197} & 0.166 & \underline{0.208} & 0.174 & 0.214 & 0.176 & 0.217 & 0.166 & 0.210 & \underline{0.158} & 0.230 & 0.202 & 0.261 & 0.172 & 0.220 & 0.173 & 0.223 & 0.196 & 0.255 & 0.221 & 0.306 & 0.217 & 0.296 \\
        & 192 & \textbf{0.206} & \textbf{0.245} & 0.217 & \underline{0.253} & 0.221 & 0.254 & 0.221 & 0.256 & 0.215 & 0.256 & \textbf{0.206} & 0.277 & 0.242 & 0.298 & 0.219 & 0.261 & 0.245 & 0.285 & 0.237 & 0.296 & 0.261 & 0.340 & 0.276 & 0.336 \\
        & 336 & \textbf{0.265} & \textbf{0.286} & 0.282 & 0.300 & 0.278 & \underline{0.296} & \underline{0.275} & \underline{0.296} & 0.287 & 0.300 & \underline{0.272} & 0.335 & 0.287 & 0.335 & 0.280 & 0.306 & 0.321 & 0.338 & 0.283 & 0.335 & 0.309 & 0.378 & 0.339 & 0.380 \\
        & 720 & \textbf{0.347} & \textbf{0.340} & 0.356 & 0.351 & 0.358 & 0.347 & \underline{0.352} & \underline{0.346} & 0.355 & 0.348 & 0.398 & 0.418 & \underline{0.351} & 0.386 & 0.365 & 0.359 & 0.414 & 0.410 & 0.345 & 0.381 & 0.377 & 0.427 & 0.403 & 0.428 \\
        \cmidrule{2-26}
        & Avg & \textbf{0.243} & \textbf{0.267} & \underline{0.255} & \underline{0.278} & 0.258 & \underline{0.278} & 0.256 & 0.279 & 0.256 & 0.279 & 0.259 & 0.315 & 0.271 & 0.320 & 0.259 & 0.287 & 0.288 & 0.314 & 0.265 & 0.317 & 0.292 & 0.363 & 0.309 & 0.360 \\
        \midrule
        \multirow{5}{*}{Solar} & 96 & \textbf{0.187} & \textbf{0.218} & \underline{0.200} & \underline{0.230} & 0.203 & 0.237 & 0.205 & 0.246 & 0.221 & 0.275 & 0.310 & 0.331 & 0.312 & 0.399 & 0.250 & 0.292 & 0.215 & 0.249 & 0.290 & 0.378 & 0.237 & 0.344 & 0.242 & 0.342 \\
        & 192 & \textbf{0.223} & \textbf{0.243} & \underline{0.229} & \underline{0.253} & 0.233 & 0.261 & 0.237 & 0.267 & 0.268 & 0.306 & 0.734 & 0.725 & 0.339 & 0.416 & 0.296 & 0.318 & 0.254 & 0.272 & 0.320 & 0.398 & 0.280 & 0.380 & 0.285 & 0.380 \\
        & 336 & \textbf{0.237} & \textbf{0.257} & \underline{0.243} & \underline{0.269} & 0.248 & 0.273 & 0.250 & 0.276 & 0.272 & 0.294 & 0.750 & 0.735 & 0.368 & 0.430 & 0.319 & 0.330 & 0.290 & 0.296 & 0.353 & 0.415 & 0.304 & 0.389 & 0.282 & 0.376 \\
        & 720 & \textbf{0.239} & \textbf{0.258} & \underline{0.245} & \underline{0.272} & 0.249 & 0.275 & 0.252 & 0.275 & 0.281 & 0.313 & 0.769 & 0.765 & 0.370 & 0.425 & 0.338 & 0.337 & 0.285 & 0.200 & 0.356 & 0.413 & 0.308 & 0.388 & 0.357 & 0.427 \\
        \cmidrule{2-26}
        & Avg & \textbf{0.221} & \textbf{0.244} & \underline{0.229} & \underline{0.256} & 0.233 & 0.262 & 0.236 & 0.266 & 0.260 & 0.297 & 0.641 & 0.639 & 0.347 & 0.417 & 0.301 & 0.319 & 0.261 & 0.381 & 0.330 & 0.401 & 0.282 & 0.375 & 0.291 & 0.381 \\
        \midrule
        \textbf{$1^{st} / 2^{nd}$} & count & 54 & 10 & 11 & 34 & 0 & 7 & 1 & 11 & 0 & 1 & 1 & 2 & 0 & 3 & 0 & 0 & 0 & 0 & 0 & 0 & 0 & 0 & 3 & 0 \\
        \bottomrule
    \end{tabular}
    \caption{Performance comparison across 8 real-world datasets with $L=96$ and varying $H$: CASA achieves the best results in 54 out of 64 metrics (84.4\%)}
    \label{tab:full results}
\end{table*}

\section{Proofs} \label{Apen:E}
\subsection{Proof of Proposition 1.}
\begin{proof}
For matrix $A$, define the $r$-th row as $\mathbf{row}_r(A)$ and the $s$-th column as $\mathbf{col}_s(A)$. Furthermore, for the vector $v$, define its $l$-th component as $v(l)$. Then, the feature for the $r$-th variate in $Q_i$, denoted as $\mathbf{row}_r(Q_i)$ is calculated as follows:
\begin{align}
    \mathbf{row}_r(Q_{i+1}) &= \mathbf{row}_r(Z_{i,1}) * W_{i,1} + \mathbf{row}_r (b_{i,1})
\end{align}
Take any $\Delta Z_{i,1} \in \mathbb{R}^{N \times D}$ such that $\; \mathbf{row}_r(\Delta Z_{i,1}) = \vec{0}$. Define $\widetilde{Z}_{i,1} \coloneqq Z_{i,1} + \Delta Z_{i,1}$. Then $\mathbf{row}_r(\widetilde{Z}_{i,1}) = \mathbf{row}_r(Z_{i,1})$. Therefore,
\begin{align}
    \mathbf{row}_r(Q_{i+1}) &= \mathbf{row}_r(Z_{i,1}) * W_{i,1} +\mathbf{row}_r (b_{i,1}) \\
                        &= \mathbf{row}_r(\widetilde{Z}_{i,1}) * W_{i,1} + \mathbf{row}_r (b_{i,1}) \\
                        &= \mathbf{row}_r(\widetilde{Q}_{i+1})
\end{align}
Thus, for any $r$, the value of $\mathbf{row}_r(Q_{i+1})$ depends solely on the columns of $W_{i,1}$ without interacting with other rows, that is, other variates. Similarly, key embeddings reach the same conclusion in the same way.
\end{proof}

\subsection{Proof of Proposition 2.}
\begin{proof}
By transposing $Z_{i,1}$, it can be proven in the same way as Proposition 1
\end{proof}


\end{document}